\newtheorem{theorem}{Theorem}
\newtheorem{definition}{Definition}
\newcommand\shrink[1]{}
\def\n(#1){\bar{#1}}
\def\X{{\bf X}}
\def\x{{\bf x}}
\def\y{{\bf y}}
\def\eql(#1,#2){{#1\!\!=\!#2}}
\def\eql(#1,#2){{#1\!=\!#2}}
\newcommand\name[1]{\ensuremath{\mathsf{#1}}}
\def\true{\name{true}}
\def\false{\name{false}}
\def\clap#1{\hbox to 0pt{\hss#1\hss}}
\newcommand\Amain[1]{
\textbf{main:}
#1
\vspace{1mm}
}
\newcommand\Ainput[1]{
\vspace{1mm}
\textbf{input:} #1
}
\newcommand\Aoutput[1]{
\vspace{1mm}
\textbf{output:} #1
\vspace{1mm}
}
\def\robust{{\mathsf{r}}}
\def\modelrobust{{\mathsf{mr}}}
\def\maxrobust{{\mathsf{maxr}}}
\def\modelcount{{\mathsf{model\_count}}}
\def\gek(#1){h_{{#1}}}
\title{On Tractable Representations of Binary Neural Networks}
\author{%
Weijia Shi\and
Andy Shih\and
Adnan Darwiche\and
Arthur Choi \\
\affiliations
Computer Science Department, University of California, Los Angeles, California, USA\\
\emails
{\{swj0419,andyshih,darwiche,aychoi\}@cs.ucla.edu}
}
\begin{document}

\maketitle

\begin{abstract}
We consider the compilation of a binary neural network's decision function into tractable representations such as Ordered Binary Decision Diagrams (OBDDs) and Sentential Decision Diagrams (SDDs).  Obtaining this function as an OBDD/SDD facilitates the explanation and formal verification of a neural network's behavior. First, we consider the task of verifying the robustness of a neural network, and show how we can compute the expected robustness of a neural network, given an OBDD/SDD representation of it.  Next, we consider a more efficient approach for compiling neural networks, based on a pseudo-polynomial time algorithm for compiling a neuron. We then provide a case study in a handwritten digits dataset, highlighting how two neural networks trained from the same dataset can have very high accuracies, yet have very different levels of robustness.  Finally, in experiments, we show that it is feasible to obtain compact representations of neural networks as SDDs.
\end{abstract}

\section{Introduction} \label{sec:intro}

Recent progress in artificial intelligence and the increased deployment of AI systems have highlighted the need for \emph{explaining} the decisions made by such systems; see, e.g., \cite{BaehrensSHKHM10,lime:kdd16,anchors:aaai18,Lipton18,ShihCD18,IgnatievNM19b,DarwicheHirth20a}.\footnote{It is now recognized that opacity, or lack of explainability is ``one of the biggest obstacles to widespread adoption of artificial intelligence'' (The Wall Street Journal, August 10, 2017).} For example, one may want to explain {\em why} a classifier decided to turn down a loan application, or rejected an applicant for an academic program, or recommended surgery for a patient. Answering such {\em why?} questions is particularly central to assigning blame and responsibility, which lies at the heart of legal systems and is further a requirement in certain contexts.\footnote{Take for example the European Union general data protection regulation, which has a provision relating to explainability, \url{http://www.privacy-regulation.eu/en/r71.htm}. \label{footnote:eu}}  The {\em formal verification} of AI systems has also come into focus recently, particularly when such systems are deployed in safety-critical applications.

We propose a \emph{knowledge compilation} approach for explaining and verifying the behavior of a neural network classifier.  Knowledge compilation is a sub-field of AI that studies in part \emph{tractable} Boolean circuits, and the trade-offs between succinctness and tractability \cite{SelmanK96,CadoliD97,darwicheJAIR02,Darwiche14}.  By enforcing different properties on the structure of a Boolean circuit, one can obtain greater tractability (the ability to perform certain queries and transformations in polytime) at the possible expense of succinctness (the size of the resulting circuits). Our goal is to compile the Boolean function specified by a neural network into a tractable Boolean circuit that facilitates explanation and verification.

We consider neural networks whose inputs are binary (\(0/1\)) and that use step activations. Such a network would have real-valued parameters, but the network itself induces a purely Boolean function.  We seek a \emph{tractable Boolean circuit} that represents this function, which we obtain in two steps.
First, note that neurons with step activations and binary inputs then produce a binary output---each neuron induces its own Boolean function.  Using, e.g., the algorithm of \cite{chanUAI03} we can obtain a tractable circuit for a given neuron's Boolean function.  The neural network then induces a Boolean circuit, although it may not be tractable.  Thus, we \emph{compile} this circuit into a tractable one by enforcing additional properties on the circuit until certain operations become tractable, as done in the field of knowledge compilation.  We then explain the decisions and verify the properties of this circuit, as done in \cite{ShihCD18,ShihCD18b}; cf. \cite{DarwicheHirth20a}.

Our approach follows a recent trend in analyzing machine learning models using symbolic approaches such as satisfiability and satisfiability modulo theory; see, e.g., \cite{KatzBDJK17,Leofante18,NarodytskaKRSW18,ShihCD18,IgnatievNM19a,IgnatievNM19b,SDC19,Audemardetal20}.  While machine learning and statistical methods are key for learning classifiers, it is clear that symbolic and logical approaches, which are independent of any of the models parameters, are key for analyzing and reasoning about them.  Our approach, based on compilation into a tractable Boolean circuit, can go beyond queries based on (for example) satisfiability, as we shall show.

This paper is organized as follows.  In Section~\ref{sec:preliminaries}, we review relevant background material.  In Section~\ref{sec:neurons}, we show how to reduce neural networks to Boolean circuits by compiling each neuron into a
Boolean circuit.  In Section~\ref{sec:kc}, we discuss how to obtain tractable circuits, via knowledge compilation.  In Section~\ref{sec:robustness}, we show how a tractable circuit enables one to reason about the robustness of a neural network.
In Section~\ref{sec:case-study}, we provide a case study with experimental results and finally conclude with a discussion in Section~\ref{sec:conclusion}.

\section{Technical Preliminaries} \label{sec:preliminaries}

\begin{figure}[tb]
\centering
\begin{subfigure}[b]{\linewidth}
\centering
\includegraphics[width=.6\linewidth]{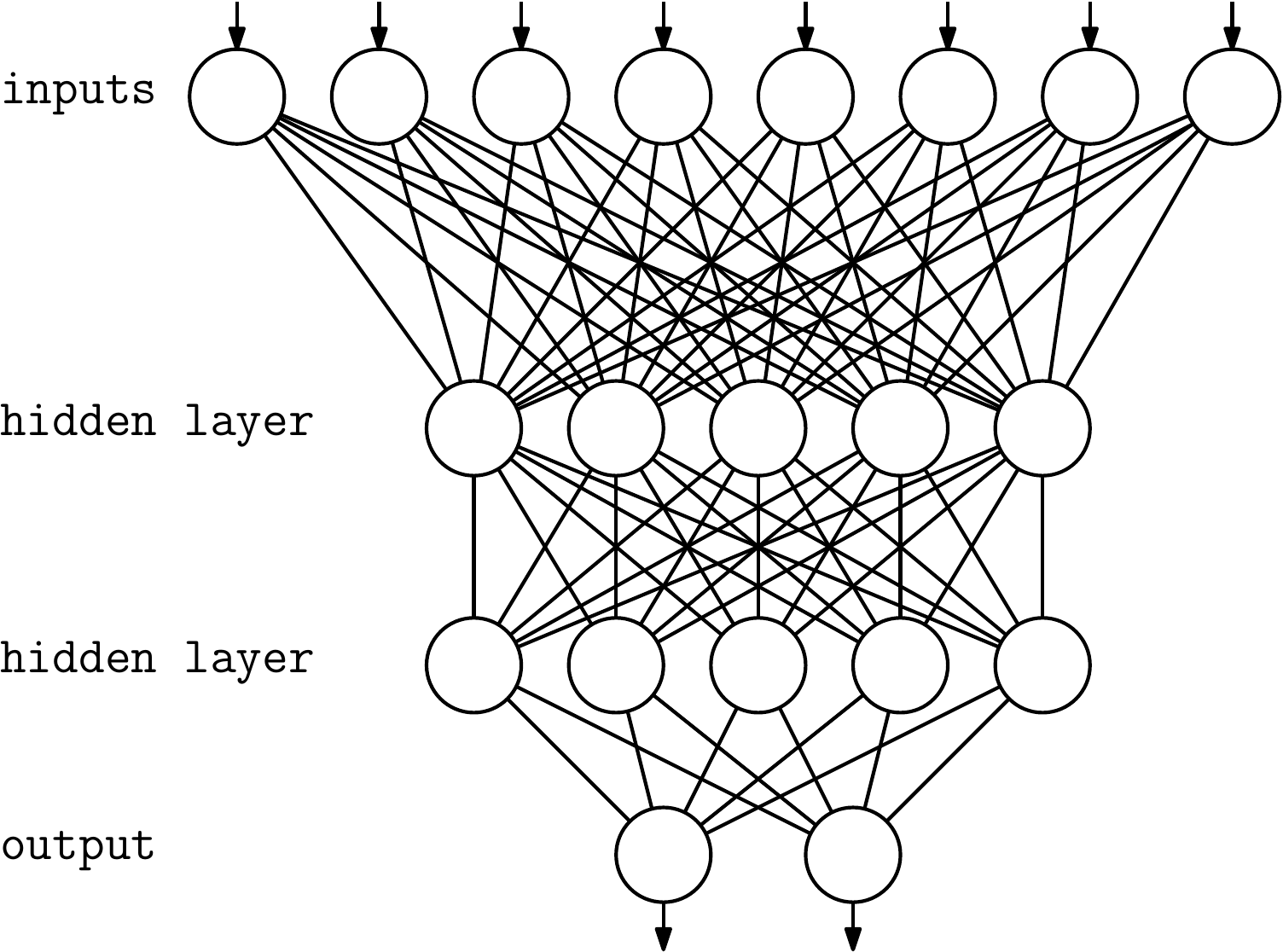}
\caption{A neural network structure}\label{fig:structure}
\end{subfigure} \\ \bigskip
\begin{subfigure}[b]{\linewidth}
\centering
\includegraphics[width=.6\linewidth]{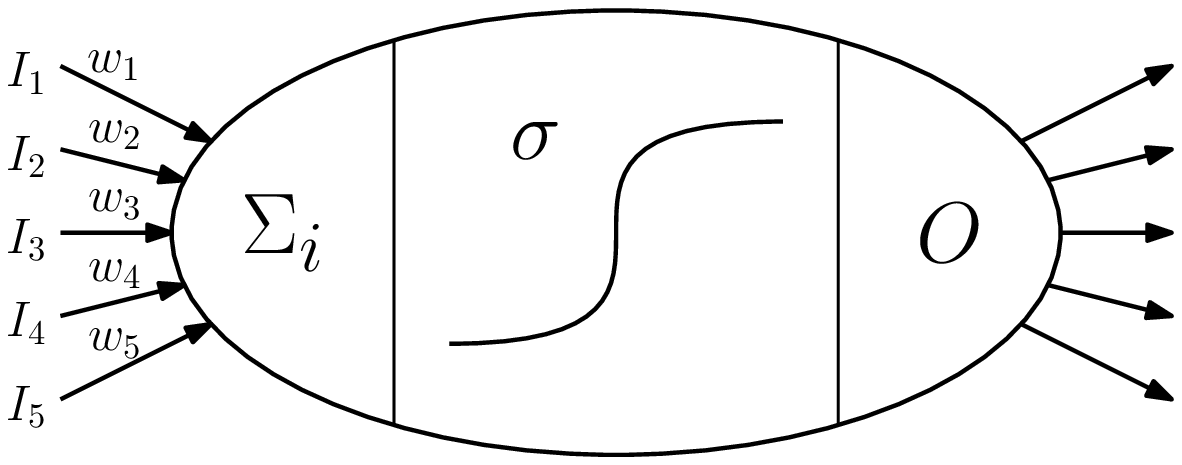}
\caption{A mathematical model of a neuron}\label{fig:neuron}
\end{subfigure} \\ \bigskip
\begin{subfigure}[b]{\linewidth}
\centering
\end{subfigure}
\caption{A neural network and a neuron.
A sigmoid activation \(\sigma(x) = \frac{1}{1+\exp\{-x\}}\) acts as a soft threshold which tends to 0 and 1 as \(x\) goes to \(-\infty\) and \(\infty,\) respectively. A ReLU activation \(\sigma(x) = \max(0,x)\) outputs 0 if \(x < 0\) and outputs \(x\) otherwise.
 \label{fig:neural-network}}
\end{figure}

A feedforward neural network is a directed acyclic graph (DAG); see Figure~\ref{fig:structure}. The roots of the DAG are the neural network inputs, call them \(X_1, \ldots, X_n\). The leaves of the DAG are the neural network outputs, call them \(Y_1, \ldots, Y_m\). Each node in the DAG is called a {\em neuron} and contains an {\em activation function} \(\sigma\); see Figure~\ref{fig:neuron}. Each edge \(I\) in the DAG has a {\em weight} \(w\) attached to it. The weights of a neural network are its {\em parameters,} which are learned from data.

In this paper, we assume that the network inputs \(X_i\) are either \(0\) or \(1\).  We further assume \emph{step} activation functions:
\[
\sigma(x) =
\left\{
\begin{tabular}{rl}
1 & if \(x \ge 0\) \\
0 & otherwise
\end{tabular}
\right.
\]
A neuron with a step activation function has outputs that are also \(0\) or \(1\).  If the network inputs are also \(0\) or \(1\), then this means that the inputs to all neurons are \(0\) or \(1\).  Moreover, the output of the neural network is also \(0\) or \(1\).  Hence, each neuron and the network itself can be viewed as a function mapping binary inputs to a binary output, i.e., a Boolean function.  For each neuron, we shall simply refer to this function as the \emph{neuron's Boolean function}.  When there is a single output \(Y\), we will simply refer to the corresponding function as the \emph{network's Boolean function}.

\section{From Neural Networks to Boolean Circuits} \label{sec:neurons}

\begin{figure}[tb]
\centering
\begin{subfigure}[b]{.35\linewidth}
\centering
\includegraphics[width=.6\linewidth]{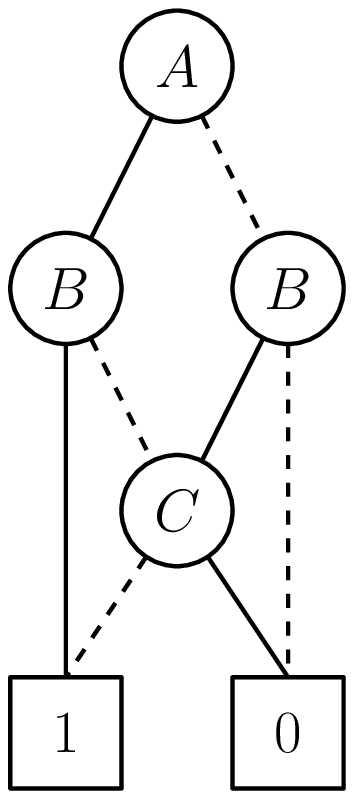}
\caption{An OBDD}\label{fig:neuron-obdd}
\end{subfigure}
\begin{subfigure}[b]{.35\linewidth}
\centering
\includegraphics[width=.8\linewidth]{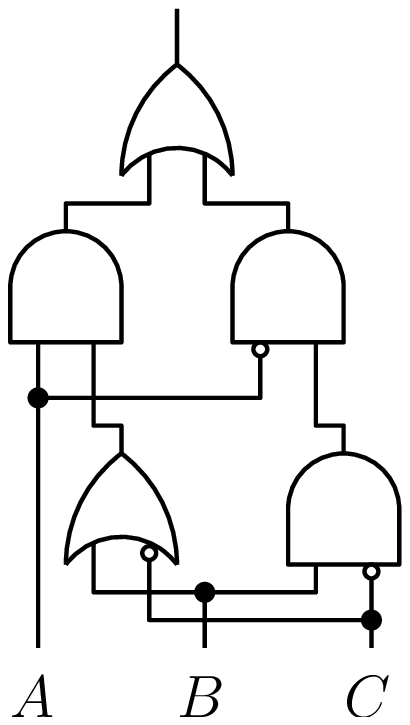}
\caption{A circuit}\label{fig:neuron-circuit}
\end{subfigure}
\caption{An OBDD and circuit representation of a neuron \(\sigma(A + B -C - 1)\) where \(\sigma\) is a step activation function.
 \label{fig:neuron-decision}}
\end{figure}

Consider a neuron with step activation function \(\sigma\), inputs \(I_{i}\), weights \(w_{i}\) and bias \(b\). The output of this neuron is simply
\begin{equation} \label{eq:neuron}
\sigma(\sum_{i} w_{i} \cdot I_{i} + b) =
\left\{
\begin{tabular}{rl}
1 & if \(\sum_{i} w_{i} \cdot I_{i} + b \ge 0\) \\
0 & otherwise
\end{tabular}
\right.
\end{equation}
As an example, consider a neuron with 3 inputs \(A,B\) and \(C\) with weights \(w_1 = 1.15, w_2 = 0.95\) and \(w_3 = -1.05\) and a bias of \(-0.52\). This neuron outputs 1 iff:
\[
1.15 \cdot A + 0.95 \cdot B - 1.05 \cdot C \ge 0.52 
\]
Treating a value of 1 as \true\ and a value of 0 as \false, we can view this neuron as a Boolean function \(f(A,B,C)\) whose output matches that of the neuron, on inputs \(A,B\) and \(C\).
Figure~\ref{fig:neuron-decision} highlights two logically equivalent representations of this neuron's Boolean function. Figure~\ref{fig:neuron-obdd} highlights an Ordered Binary Decision Diagram (OBDD) representation\footnote{An Ordered Binary Decision Diagram (OBDD) is a rooted DAG with two sinks: a \(1\)-sink and a \(0\)-sink.  An OBDD is a graphical representation of a Boolean function on variables \(\X = \{X_1,\ldots,X_n\}\). Every OBDD node (but the sinks) is labeled with a variable \(X_i\) and has two labeled outgoing edges: a \(1\)-edge and a \(0\)-edge. The labeling of the OBDD nodes respects a global ordering of the variables \(\X\): if there is an edge from a node labeled \(X_i\) to a node labeled \(X_j\), then \(X_i\) must come before \(X_j\) in the ordering. To evaluate the OBDD on an instance \(\x\), start at the root node of the OBDD and let \(x_i\) be the value of variable \(X_i\) that labels the current node. Repeatedly follow the \(x_i\)-edge of the current node, until a sink node is reached. Reaching the \(1\)-sink means \(\x\) is evaluated to 1 and reaching the \(0\)-sink means \(\x\) is evaluated to 0 by the OBDD.} and Figure~\ref{fig:neuron-circuit} highlights a circuit representation. These functions are equivalent to the sentence:
\[
[ \neg C \wedge (A \vee B) ] \vee [ C \wedge A \wedge B ],
\]
i.e., if \(C\) is \(0\) then \(A\) or \(B\) must be 1 to meet or surpass the threshold \((\ge 0)\), and if \(C\) is 1 then both \(A\) and \(B\) must be 1.

OBDDs, as in Figure~\ref{fig:neuron-obdd}, are \emph{tractable} representations---they support many operations in time \emph{polynomial} (and typically \emph{linear}) in the size of the OBDD \cite{Bryant86,MeinelT98,Wegener00}.  Circuits, as in Figure~\ref{fig:neuron-circuit}, are not in general tractable as OBDDs, although we will later seek to obtain tractable circuits through knowledge compilation, a subject which we will revisit in more depth in Section~\ref{sec:kc}.  Note further that OBDDs are also circuits that are notated more compactly.\footnote{An OBDD node labeled by variable \(X\) and with children \(f_x\) and \(f_{\n(x)}\) is equivalent to the circuit fragment \((x \wedge f_x) \vee (\n(x) \wedge f_{\n(x)}).\)}

Our first goal is to obtain a tractable circuit representation of a given neuron.
First, consider the following class of threshold-based linear classifiers.
\begin{definition}
Let \(\X\) be a set of binary features where each feature \(X\) in \(\X\) has a value \(x \in \{0,1\}\).  Let \(\x\) denote an instantiation of variables \(\X\).  Consider functions \(f\) that map instantiations \(\x\) to a value in \(\{0,1\}\).  We call \(f\) a linear classifier if it has the following form:
\begin{equation} \label{eq:linear}
f(\x) =
\left\{
\begin{tabular}{rl}
1 & if \(\sum_{x \in \x} w_x \cdot x \ge T\) \\
0 & otherwise
\end{tabular}
\right.
\end{equation}
where \(T\) is a threshold, \(x \in \x\) is the value of variable \(X\) in instantiation \(\x\), and where \(w_{x}\) is the real-valued weight associated with value \(x\) of variable \(X\).
\end{definition}
Note that such classifiers are also Boolean functions.
The following result, due to \cite{chanUAI03}, gives us a way of obtaining a tractable circuit representing the Boolean function of such classifiers.
\begin{theorem} \label{theorem:compile}
A linear classifier in the form of Equation~\ref{eq:linear} can be represented by an OBDD of size \(O(2^{\frac{n}{2}})\) nodes, which can be computed in \(O(n 2^{\frac{n}{2}})\) time.
\end{theorem}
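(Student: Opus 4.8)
The plan is to construct the OBDD under an arbitrary but fixed variable order $X_1 < \cdots < X_n$ and to bound its number of nodes level by level, exploiting a ``meet in the middle'' split of the variables at position $\lceil n/2\rceil$. The first observation I would make is that once the top $i$ variables have been assigned, the behaviour of $f$ on the rest is controlled by a single number. Writing $w_j$ for the weight that $X_j$ contributes when set to $1$ and $S_i(\x) = \sum_{j \le i} w_j x_j$ for the partial sum, the restriction of $f$ obtained by fixing $X_1,\ldots,X_i$ is again a linear classifier in the sense of Equation~\ref{eq:linear}, namely $[\,\sum_{j>i} w_j x_j \ge T_i\,]$ with \emph{residual threshold} $T_i = T - S_i(\x)$. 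By the standard characterization of reduced OBDDs, the number of nodes labeled $X_{i+1}$ is at most the number of \emph{distinct} such restricted functions over all assignments to $X_1,\ldots,X_i$, so it suffices to count these.

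I would bound that count in two complementary ways. On one hand, a restricted function is determined by $T_i$, and $T_i$ takes at most one value per distinct partial sum $S_i(\x)$, hence at most $2^i$ values. On the other hand, let $v_1 \le \cdots \le v_k$ be the (at most $2^{\,n-i}$) achievable values of $\sum_{j>i} w_j x_j$; the function $[\,\cdot \ge T_i\,]$ changes only when $T_i$ crosses one of the $v_\ell$, so there are at most $2^{\,n-i}+1$ distinct restricted functions no matter how many values $T_i$ itself takes. Combining, level $i{+}1$ has at most $\min(2^i,\,2^{\,n-i}+1)$ nodes. This equivalence is the crux, and I expect the main obstacle to be stating cleanly that two assignments induce the same restricted function precisely when their residual thresholds fall in the same gap between consecutive $v_\ell$, which is what licenses the merging of OBDD nodes into this small width.

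Summing the per-level bounds then yields the size estimate:
\[
\sum_{i=0}^{n} \min\!\left(2^i,\; 2^{\,n-i}+1\right) \;\le\; \sum_{i \le n/2} 2^i \;+\; \sum_{i > n/2}\left(2^{\,n-i}+1\right) \;=\; O\!\left(2^{\frac{n}{2}}\right),
\]
since both geometric sums are dominated by their largest terms near $i = n/2$ and the $+1$ terms contribute only $O(n)$; adding the two sinks is absorbed in the $O$-notation.

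For the running time I would give a level-by-level construction realizing these widths. The top half forms a tree in which nodes carrying equal partial sums are merged, giving $O(2^{n/2})$ nodes built with constant amortized work each. For the bottom half I would precompute and sort the $O(2^{n/2})$ achievable values $v_\ell$ once, at cost $O(2^{n/2}\log 2^{n/2}) = O(n\,2^{n/2})$, and then map each residual threshold $T_i$ to its node by a binary search locating $T_i$ among the $v_\ell$. As the total number of nodes is $O(2^{n/2})$ and each is processed with $O(n)$ overhead (a binary search, or accumulating a partial sum), the overall construction time is $O(n\,2^{n/2})$, matching the claimed bound.
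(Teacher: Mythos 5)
Your proof is correct and takes essentially the same route as the paper's: the paper offers no standalone proof of Theorem~\ref{theorem:compile} beyond citing \cite{chanUAI03}, and its Appendix~\ref{sec:pseduo-compile} explicitly notes that the cited construction works by identifying equivalence classes of sub-classifiers that are identical except for their residual threshold, which is exactly your level-by-level \(\min(2^i,\,2^{n-i}+1)\) meet-in-the-middle argument. The only loose end in your write-up is that the sorted lists of achievable sums are needed at \emph{every} level of the bottom half rather than ``once,'' but this is easily repaired by maintaining the lists incrementally (merging in one weight per level), which stays within the \(O(n\,2^{n/2})\) time budget.
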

\cite{chanUAI03} further provided an algorithm to obtain the result of Theorem~\ref{theorem:compile}, although much more efficiently than what the bounds suggest.  It was originally for compiling naive Bayes classifiers to Ordered Decision Diagrams (ODDs).  However, this algorithm applies to any classifier of the form given by Equation~\ref{eq:linear}, which includes naive Bayes classifiers, but also logistic regression classifiers, as well as neurons with step activation functions \cite{Elkan97}.

Compiling a linear classifier such as a neuron or a naive Bayes classifier is NP-hard \cite{ShihCD18}, hence algorithms, such as the one from \cite{chanUAI03}, are unlikely to have much tighter bounds.  However, we can significantly tighten this bound if we make additional assumptions about the classifier's parameters.
\begin{theorem} \label{theorem:pseudo-compile}
Consider a linear classifier in the form of Equation~\ref{eq:linear}, where the weights \(w_x\) and threshold \(T\) are integers.  Such a classifier can be represented by an OBDD of size \(O(nW)\) nodes, and compiled in \(O(nW)\) time, where \(W = |T| + \sum_x |w_x|\) is a sum of absolute values.
\end{theorem}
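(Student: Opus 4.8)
The plan is to build the OBDD by a knapsack-style dynamic program over a fixed variable order \(X_1,\ldots,X_n\), and then to bound its size by a merging argument. Let \(w_i\) denote the weight contributed when \(X_i\) is set to \(1\) (so \(X_i=0\) contributes \(0\)), and note the running sum is always an integer. I would construct the OBDD in \(n\) layers, where each node at layer \(i\) (i.e.\ a node that branches on \(X_i\)) is identified with the partial sum \(s=\sum_{j<i} w_j x_j\) accumulated by the already-decided variables \(X_1,\ldots,X_{i-1}\). From such a node, the \(0\)-edge leads to the layer-\((i{+}1)\) node with the same sum \(s\), and the \(1\)-edge leads to the layer-\((i{+}1)\) node with sum \(s+w_i\). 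After all \(n\) variables are decided we hold the full sum \(S\), and we route to the \(1\)-sink if \(S\ge T\) and to the \(0\)-sink otherwise.

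The essential step is to control how many distinct partial sums survive at each layer, since naively there could be exponentially many. For the suffix \(X_i,\ldots,X_n\) define the least and greatest achievable remaining contributions \(L_i=\sum_{j\ge i}\min(0,w_j)\) and \(U_i=\sum_{j\ge i}\max(0,w_j)\). A layer-\(i\) node with partial sum \(s\) accepts iff some completion satisfies \(s+\sum_{j\ge i} w_j x_j \ge T\); its behaviour is therefore fully determined once \(T-s\le L_i\) (it accepts for every completion, so it is equivalent to the \(1\)-sink) or \(T-s> U_i\) (it rejects for every completion, so it is equivalent to the \(0\)-sink). The only partial sums that yield a genuinely new, non-sink node are those with \(T-U_i\le s< T-L_i\), an interval containing exactly \(U_i-L_i=\sum_{j\ge i}|w_j|\) integers. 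Hence each layer \(i\) contributes at most \(\sum_{j\ge i}|w_j|+2\le W+2\) nodes, where \(W=|T|+\sum_x|w_x|\).

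Summing over the \(n\) layers gives \(O(nW)\) nodes. For the time bound I would realize this dedup explicitly: at each layer keep an array indexed by the integer partial sum over its \(O(W)\)-wide window, so that looking up (or creating) the target of each \(0\)- and \(1\)-edge is \(O(1)\); precomputing all \(L_i,U_i\) is a single \(O(n)\) sweep. Since each of the \(O(nW)\) nodes emits two edges processed in constant time, the whole construction runs in \(O(nW)\). I expect the main obstacle to be the merging argument of the second paragraph rather than the construction itself: one must argue carefully that partial sums outside the window \([T-U_i,\,T-L_i)\) are \emph{logically} equivalent to a sink (so they may be merged, preserving correctness of the represented Boolean function) and that exactly \(\sum_{j\ge i}|w_j|\) integer states remain inside it; once this windowing is established, the \(O(nW)\) size and time bounds follow routinely.
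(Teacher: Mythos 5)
Your proof is correct and takes essentially the same approach as the paper's: both build a layered, knapsack-style DAG indexed by (variable index, accumulated partial sum / residual threshold), merge nodes whose residual sub-classifiers coincide, and bound each layer's width by $O(W)$ to get $O(nW)$ nodes and time. Your windowing argument via $L_i$ and $U_i$ just makes explicit the counting of ``valid thresholds'' that the paper asserts more briefly.
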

\noindent While this result is known, Appendix~\ref{sec:pseduo-compile} provides a construction for completeness.\footnote{This result appears, for example, as an exercise in \url{https://www.cs.ox.ac.uk/people/james.worrell/lectures.html}.  This result also falls as a special case of \cite{ChubarianT20}, which showed how to compile tree-augmented naive Bayes classifiers into OBDDs, where a naive Bayes classifier is a special case.}  Note that the integrality assumption of Theorem~\ref{theorem:pseudo-compile} can be applied to classifiers with real-valued weights by multiplying the parameters by a constant and then truncating (i.e., the parameters have fixed precision). As we show later, this pseudo-polynomial time algorithm enables the compilation of neurons, and ultimately neural networks, with \emph{hundreds} of features. This is in contrast to the preliminary work of \cite{ChoiShiShihDarwiche18}, which was based on the algorithm of \cite{chanUAI03}
for compiling neurons that scaled only to \emph{dozens} of features.

\begin{figure}[tb]
\centering
\begin{subfigure}[b]{.32\linewidth}
\centering
\includegraphics[width=.95\linewidth]{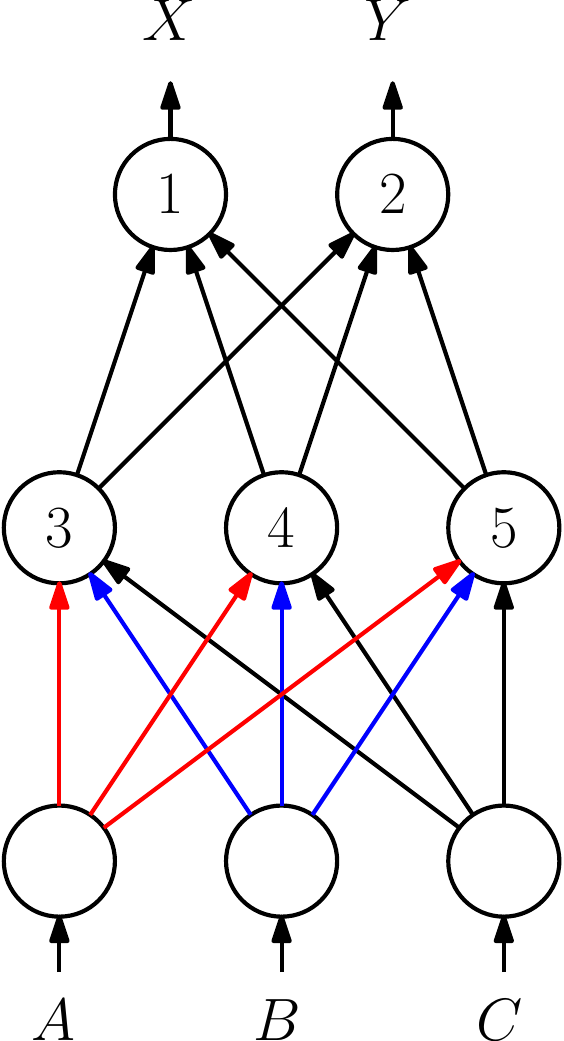}
\caption{neural network}\label{fig:nn}
\end{subfigure}
\begin{subfigure}[b]{.32\linewidth}
\centering
\includegraphics[width=.9\linewidth]{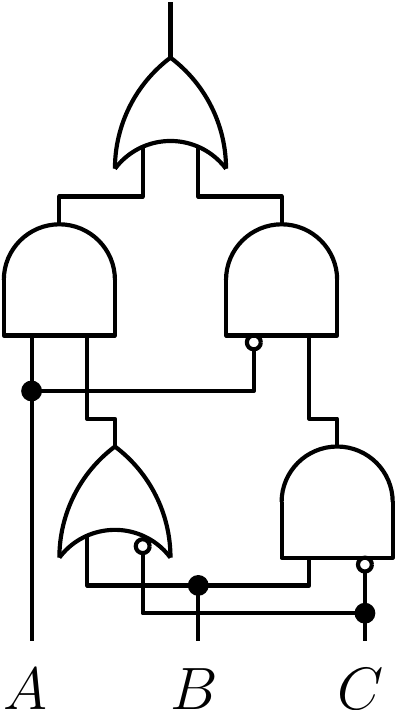}
\caption{circuit of neuron}\label{fig:neuron-circuit-example}
\end{subfigure}
\begin{subfigure}[b]{.32\linewidth}
\centering
\includegraphics[width=.95\linewidth]{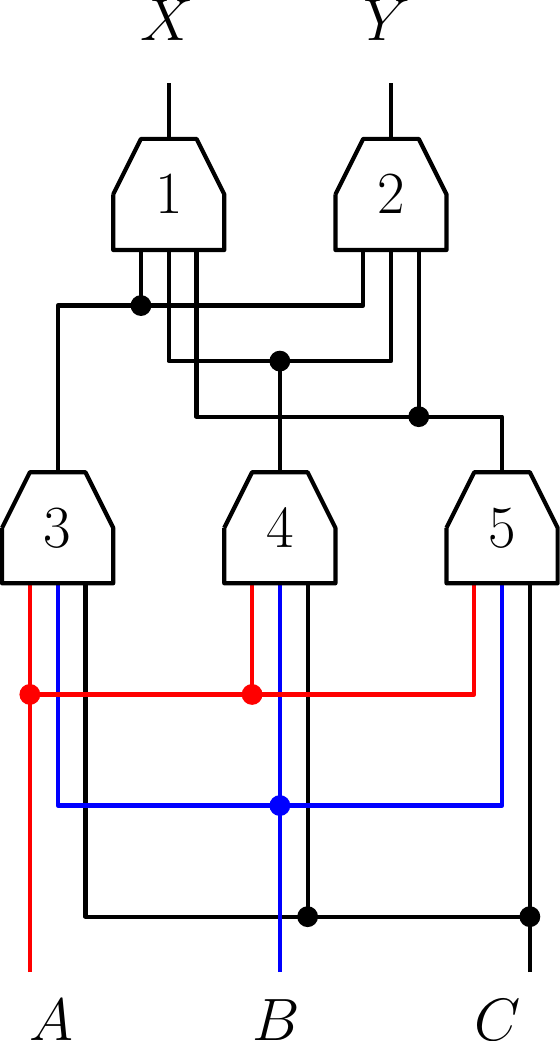}
\caption{circuit of network}\label{fig:nn-circuit}
\end{subfigure}
\caption{A neural network, the circuit of a single neuron, and the circuit of the original network.  Wires highlighted in red and blue correspond to the inputs \(A\) and \(B\), respectively.
 \label{fig:nn-compile}}
\end{figure}

Now that we can compile each neuron into a (tractable) Boolean circuit,  the whole neural network will then induce a Boolean circuit as illustrated in Figure~\ref{fig:nn-compile}. That is, for the given neural network in Figure~\ref{fig:nn}, each neuron is compiled into a Boolean circuit as in Figure~\ref{fig:neuron-circuit-example}. The circuits for neurons are then connected according to the neural network structure, leading to the Boolean circuit in Figure~\ref{fig:nn-circuit}, where the
circuit of each neuron is portrayed as a block. 

Using the algorithm of \cite{chanUAI03}, the Boolean circuit that we obtain from a neuron is tractable.  
The network's Boolean circuit, that we construct from the Boolean circuits of the neurons, may not be tractable however.
To use the explanation and verification techniques proposed in \cite{ShihCD18,ShihCD18b}, we require a tractable circuit; cf. \cite{DarwicheHirth20a}. We next show how to obtain such a circuit using tools from the field of knowledge compilation.

\section{Tractability via Knowledge Compilation} \label{sec:kc}

In this section, we provide a short introduction to the domain of knowledge compilation, and then show how we can compile a neural network into a tractable Boolean circuit.

We follow \cite{darwicheJAIR02}, which considers tractable representations of Boolean circuits, and the trade-offs between succinctness and tractability.  In particular, they consider Boolean circuits of and-gates, or-gates and inverters, but where inverters only appear at the inputs (hence the inputs of the circuit are variables or their negations).  This sub-class of circuits is called \emph{Negation Normal Form} (NNF) circuits. Any circuit with and-gates, or-gates and inverters can be efficiently converted into an NNF circuit while at most doubling its size.

\begin{figure}[tb]
\centering
\includegraphics[width=.5\linewidth]{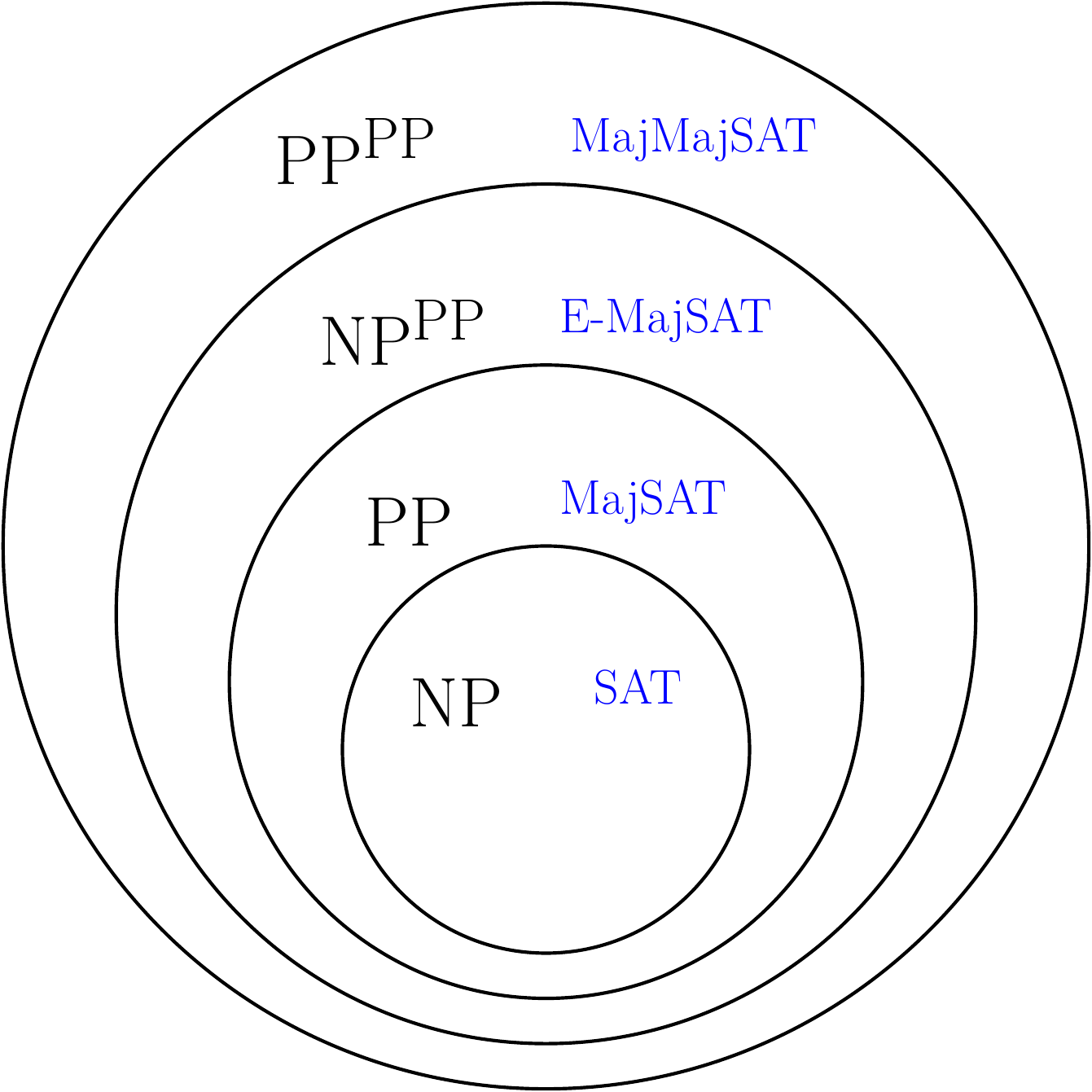}
\caption{Containment of four complexity classes: \(\mathrm{NP} \subseteq \mathrm{PP} \subseteq \mathrm{NP}^\mathrm{PP} \subseteq \mathrm{PP}^\mathrm{PP}\).  Their canonical problems are labeled in blue.
 \label{fig:complexity}}
\end{figure}

By imposing properties on the structure of NNF circuits, one can obtain greater tractability (the ability to perform certain operations in polytime) at the possible expense of succinctness (the size of the resulting circuit).  To motivate this trade-off, consider Figure~\ref{fig:complexity}, which highlights the containment relationship between four complexity classes.  The ``easiest'' class is \(\mathrm{NP}\), and the ``hardest'' class is \(\mathrm{PP}^\mathrm{PP}\).  The canonical problems that are complete for each class all correspond to queries on Boolean expressions.  One popular computational paradigm for solving problems in these classes is to reduce them to the canonical problem for that class, and to compile the resulting Boolean expressions to circuits with the appropriate properties.\footnote{For more on this paradigm, see \url{http://beyondnp.org}.}\footnote{For a video tutorial on this paradigm, ``On the role of logic in probabilistic inference and machine learning,'' see  \url{https://www.youtube.com/watch?v=xRxP2Wj4kuA}} 
For example, \cite{OztokCD16} shows how to solve
\(\mathrm{PP}^\mathrm{PP}\)-complete problems by reduction to MajMajSAT queries on a specific tractable class of Boolean circuits.

Consider now a property on NNF circuits called \emph{decomposability} \cite{darwicheJACM-DNNF}.  This property asserts that the sub-circuits feeding into an and-gate cannot share variables.  An NNF circuit that is decomposable is said to be in Decomposable Negation Normal Form (DNNF).  In a DNNF circuit, testing whether the circuit is satisfiable can be done in time \emph{linear} in the size of the circuit.  Another such property is \emph{determinism} \cite{darwiche01tractable}.  This property asserts that for each or-gate, if the or-gate outputs 1 then exactly one of its input is 1.  A DNNF circuit that is also deterministic is called a d-DNNF.  The circuit in Figure~\ref{fig:neuron-circuit} is an example of a d-DNNF circuit.  In a d-DNNF circuit, counting the number of assignments that satisfy the circuit can be done in time \emph{linear} in the size of the circuit, assuming the circuit also satisfies smoothness \cite{Darwiche03}.\footnote{Counting how many assignments satisfy a given circuit allows us to tell whether a majority of them satisfy the circuit (MajSAT).}  Hence, with these first two properties, we can solve the canonical problems in the two ``easiest'' classes illustrated in Figure~\ref{fig:complexity}.

A more recently proposed class of circuits is the Sentential Decision Diagram (SDD) \cite{Darwiche11,XueChoiDarwiche12,ChoiDarwiche13}.  SDDs are a subclass of d-DNNF circuits that assert a stronger form of decomposability, and a stronger form of determinism.  SDDs subsume OBDDs and are exponentially more succinct \cite{Bova16}.   SDDs support polytime conjunction and disjunction.  That is, given two SDDs \(\alpha\) and \(\beta\), there is a polytime algorithm to construct another SDD \(\gamma\) that represents \(\alpha \wedge \beta\) or \(\alpha \vee \beta\).\footnote{If \(s\) and \(t\) are the sizes of input SDDs, then conjoining or disjoining the SDDs takes \(O(s \cdot t)\) time, although the resulting SDD may not be compressed \cite{VdBDarwiche15}.}  Further, SDDs can be negated in linear time.\footnote{In our case study in Section~\ref{sec:case-study}, we used the open-source SDD package available at \url{http://reasoning.cs.ucla.edu/sdd/}.}

These polytime operations allow a simple algorithm for compiling a Boolean circuit with and-gates, or-gates and inverters into an SDD. We first obtain an SDD for each circuit input. We then traverse the circuit bottom-up, compiling the output of each visited gate into an SDD by applying the corresponding operation to the SDDs of the gate's inputs.

SAT and MajSAT can be solved in linear time on SDDs. Further properties on SDDs allow the problems E-MajSAT and MajMajSAT, the two hardest problems illustrated in Figure~\ref{fig:complexity}, to be also solved in time linear in the size of the SDD \cite{OztokCD16}. In our experiments, we compiled the Boolean circuits of neural networks into standard SDDs as this was sufficient for efficiently supporting the explanation and verification queries we are interested in.

\section{On the Robustness of Classifiers} \label{sec:robustness}

Neural networks are now ubiquitous in machine learning and artificial intelligence, but it is increasingly apparent that neural networks learned in practice can be fragile.  That is, they can be susceptible to misclassifying an instance after small perturbations have been applied to it \cite{SzegedyZSBEGF13,GoodfellowSS14,Dezfooli16,WangPWYJ18,ZhangZH19}.  Next, we show how compiling a neural network into a tractable circuit can provide one with the ability to analyze the robustness of a neural network's decisions.

We consider first the robustness of a binary classifier's decision to label a given instance \(0\) or \(1\).  The notion of robustness that we consider is based on the following question: how many features do we need to flip from \(0\) to \(1\) or \(1\) to \(0\), before the classifier's decision flips?  That is, we consider the robustness of a given instance to be the (Hamming) distance to the closest instance of the opposite label.

\begin{definition}[Instance-Based Robustness]
Consider a Boolean classification function \(f : \{0,1\}^n \to \{0,1\}\) and a given instance \(\x.\)  The \underline{robustness} of the classification of \(\x\) by \(f\), denoted by \(\robust_f(\x)\), is defined as follows.  If \(f\) is a trivial function (true or false), then \(\robust_f(\x) = \infty\).  Otherwise,
\[
\robust_f(\x) = \min_{\x': f(\x') \neq f(\x)} d(\x, \x')
\]
where \(d(\x', \x)\) denotes the Hamming distance between \(\x'\) and \(\x\), i.e., the number of variables on which \(\x\) and \(\x'\) differ.
\end{definition}
This notion of robustness was also considered by \cite{ShihCD18b}, who also assumed binary (or categorical) features.  Other notions of robustness are discussed in \cite{Leofante18}, for real-valued features.

Given a classification function \(f\), we refer to an instance \(\x\) as being \(k\)-robust if \(\robust_f(\x) = k\), i.e., it takes at least \(k\) flips of the features to flip the classification.
In general, it is intractable to compute the robustness of a classification, unless P=NP.  Consider the following decision problem:
\begin{quote}
\textbf{D-ROBUST:} Given function \(f\), instance \(\x\), and integer \(k\), is \(\robust_f(\x) \ge k\)?
\end{quote}
\begin{theorem} \label{thm:co-np}
\emph{\textbf{D-ROBUST}} is coNP-complete.
\end{theorem}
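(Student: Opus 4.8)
The plan is to prove two things: that \textbf{D-ROBUST} lies in coNP, and that it is coNP-hard. For both it helps to rewrite the query in its quantified form. Since a trivial $f$ has $\robust_f(\x)=\infty \ge k$ and admits no instance of the opposite label, we have in all cases
\[
\robust_f(\x) \ge k \quad\Longleftrightarrow\quad \text{no } \x' \text{ satisfies } f(\x') \neq f(\x) \text{ and } d(\x,\x') \le k-1 .
\]
The right-hand side is a statement universally quantified over instances $\x'$, which is the shape of a coNP predicate; its negation is existentially quantified, the shape of an NP predicate. This reformulation drives both halves of the argument.

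For membership, I would show the complementary problem ``is $\robust_f(\x) < k$?'' is in NP. A witness is simply an instance $\x'$, and a verifier accepts iff $f(\x') \neq f(\x)$ and $d(\x,\x') \le k-1$. Evaluating $f$ on the two instances takes time polynomial in the size of the given representation of $f$ (a Boolean circuit, CNF, or neuron), and computing the Hamming distance and comparing it to $k-1$ is immediate. Hence the complement is in NP and \textbf{D-ROBUST} is in coNP.

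For hardness, I would reduce from \textbf{UNSAT}, which is coNP-complete. Given a CNF $\phi$ over variables $Y_1,\dots,Y_n$, introduce one fresh variable $Z$ and set
\[
f \;=\; \phi \wedge Z, \qquad \x = \mathbf 0 \text{ (all variables set to } 0\text{)}, \qquad k = n+2 .
\]
Then $f(\x)=0$ regardless of $\phi$, so every opposite-label instance $\x'$ has $f(\x')=1$ and therefore $Z=1$ and $\phi(\x')=1$ on the $Y$ coordinates. Any such $\x'$ differs from $\x$ only in $Z$ and in the coordinates that it sets to $1$, so $d(\x,\x') \le n+1 = k-1$; the distance constraint is thus vacuous, and an opposite-label instance within distance $k-1$ exists iff $\phi$ is satisfiable. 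By the reformulation above, $\robust_f(\x) \ge k$ iff $\phi$ is unsatisfiable, and the construction is clearly polynomial time. This yields coNP-hardness, and together with membership, coNP-completeness.

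I expect the main obstacle to be the design of the gadget rather than either verification step. The difficulty is that the anchor instance's label $f(\x)$ must be fixed independently of $\phi$; otherwise a satisfying assignment of $\phi$ could turn $\x$ itself into an opposite-label target and scramble the correspondence. Conjoining with the fresh variable $Z$ and anchoring at $\mathbf 0$ pins $f(\x)=0$ while leaving every satisfying assignment reachable within the ball of radius $k-1$, and choosing $k=n+2$ makes that ball large enough to neutralize the Hamming constraint so that only satisfiability matters. The trivial-function case must also be checked, but it is consistent with the reduction: if $\phi$ is unsatisfiable then $f \equiv \false$ is trivial and $\robust_f(\x)=\infty \ge k$, exactly as required.
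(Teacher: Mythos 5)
Your proof is correct, and it is essentially the De Morgan dual of the paper's argument rather than a different method. The membership half is identical: a counterexample \(\x'\) with \(f(\x') \neq f(\x)\) and \(d(\x,\x') < k\) is a polytime-checkable witness for the complement. For hardness, the paper reduces from DNF tautology: it forms \(f \vee \ell\) for a fresh literal \(\ell\) and asks whether robustness is at least \(n+1\), so the function is trivially \emph{true} exactly when \(f\) is valid; you reduce from CNF unsatisfiability, form \(\phi \wedge Z\) for a fresh variable \(Z\), and ask whether robustness is at least \(n+2\), so the function is trivially \emph{false} exactly when \(\phi\) is unsatisfiable. The fresh variable plays the same role in both constructions (it forces non-triviality in the other direction of the equivalence), and the two source problems are interchangeable under negation, so conceptually this is the same reduction seen in a mirror. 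That said, your version is executed more carefully on a point the paper glosses over: you pin down the full query instance \(\x = \mathbf{0}\), which both fixes the anchor label \(f(\x) = 0\) independently of \(\phi\) and falsifies \(Z\), and you choose \(k\) to be one more than the total number of variables, which makes the Hamming-distance constraint vacuous so that only (un)satisfiability matters. The paper instead uses \(k = n+1\) over \(n+1\) variables and says only that \(\x\) is an ``arbitrary instantiation'' of the original variables, never specifying the fresh variable's value; with the wrong extension the claimed equivalence can fail --- for example, with \(f = Y\) and \(\ell = Z\), the instance \((Y{=}1, Z{=}1)\) of \(Y \vee Z\) has robustness \(2 = n+1\) even though \(Y\) is not a tautology, because the unique opposite-label instance is the antipodal point. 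Your choice of threshold sidesteps exactly this finite-but-maximal-robustness corner case, so your write-up is, if anything, tighter than the paper's sketch.
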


\begin{proof}
Let \(f\) be a DNF formula over \(n\) variables, and let \(\x\) be an arbitrary instantiation of these variables.  Further, let \(\ell\) be a literal of a new variable.  The problem is in coNP, because it is polytime falsifiable: given a counterexample \(\x^\prime,\) we can check in polytime that \(D(\x,\x^\prime) < k\) and that \(f(\x^\prime) \ne f(\x).\) The problem is coNP-hard since \(f\) is a tautology iff the DNF \(f \vee \ell\) and instantiation \(\x\) has robustness at least \(n+1\) (i.e., infinite).  See also Footnote~\ref{footnote:fn}.
\end{proof}
Theorem~\ref{thm:co-np} implies that we can use a SAT solver to compute the robustness of an instance relative to a Boolean function.\footnote{For a given \(k\), we can determine if \(r_f(x) \leq k\) by first encoding the set of instances within a distance of \(k\) away from \(x\) as a CNF formula \(\phi\), using a standard encoding. We then encode the value of \(f(x)\) and the neural network's classification function as a CNF formula \(\Delta\), using a technique similar to that by~\cite{NarodytskaKRSW18}.  The formula \(\phi \wedge \Delta\) is then satisfiable iff \(r_f(x) \leq k\). Finally, we iterate over all possible values of \(k\) (or perform binary search), so the instance-based robustness is just the smallest value of \(k\) such that \(r_f(x) \leq k\) (i.e., \(\phi \wedge \Delta\) is SAT).}  Given a tractable circuit (in particular, an OBDD), this question can also be answered in time \emph{linear} in the size of the circuit \cite{ShihCD18b}.\footnote{The robustness of an instance \(y,\x\) can be computed by the following recurrence, which recurses on the structure of an OBDD: 
\(\robust_f(y,\x) = \min\{ \robust_{f|y}(\x), 1+\robust_{f|\bar{y}}(\x)\}\)
where \(\robust_f(\x) = 0\) if \(f\) is false and \(\robust_f(\x) = \infty\) if \(f\) is true.}  We employ the algorithm given by \cite{ShihCD18b} in our case studies in Section~\ref{sec:case-study}.

Next, rather than consider the robustness of just one classification, we can consider the \emph{average} robustness of a classification function, over all possible inputs.  In other words, we consider the \emph{expected} robustness of a classifier, under a uniform distribution of its inputs.

\begin{definition}[Model-based Robustness]
Consider a Boolean classification function \(f : \{0,1\}^n \rightarrow \{0,1\}\). The \underline{model robustness} of \(f\) is defined as:
\[
\modelrobust(f)
= \frac{1}{2^n} \sum_{\x} \robust_f(\x)
\] 
\end{definition}

\begin{algorithm}[t]
    \caption{\texttt{model-robustness}($f$)}\label{alg:model}
    \Ainput{A classifier's Boolean function $f$}

    \Aoutput{The (positive) model-based robustness $\modelrobust(f)$}

    \Amain{
    \begin{algorithmic}[1]{
    \STATE $M,\gek(1) \gets 0,f$
    \FOR{$k$ from $2$ to $n$}   
        \STATE $\gek(k) \gets \bigwedge_X ( \gek(k-1)|x \wedge \gek(k-1)|\n(x) )$
        \STATE $f_{k-1} \gets \gek(k-1) \wedge \neg \gek(k)$
        \STATE $M \gets M + (k-1) \cdot \modelcount(f_{k-1})$
    \ENDFOR
    \STATE $M \gets M + n \cdot \modelcount(\gek(n))$ \hfill\COMMENT{since $f_n \equiv \gek(n)$}
    \STATE \textbf{return} $M$}
    \end{algorithmic}
    }
\end{algorithm}

Let $f$ be the classification function whose robustness we want to assess.  If \(f(\x) = 1\) we refer to \(\x\) as a positive instance; otherwise \(f(\x) = 0\) and we refer to \(\x\) as a negative instance.  We propose Algorithm~\ref{alg:model} for computing the model robustness of a classifier over all \emph{positive} instances (the robustness of negative instances can be computed by invoking Algorithm~\ref{alg:model} on function \(\neg f\)).\footnote{Recently, \cite{BalutaSSMS19} proposed an approach for estimating robustness using approximate model-counting, with PAC-style guarantees.  Their approach scaled to \(10 \times 10\) digits datasets; our exact approach scales to \(16 \times 16\) digits datasets in Section~\ref{sec:case-study}.}

Our algorithm is based on computing the set of functions \(\gek(k)\), which are the Boolean functions representing all positive instances \(\x\) that have robustness \(k\) or higher.  That is, \(\gek(k)\) represents all instances \(\x\) where \(f(\x) = 1\) and where \(\robust_f(\x) \ge k\).
First, \(\gek(1) = f\).  For \(k=2\) we have:
\(
\gek(2)
= \bigwedge_X ( f|x \wedge f|\n(x) ),
\)
where \(f|x\) denotes the conditioning of \(f\) on value \(x\), i.e., the function that we would obtain by setting \(X\) to true (replace every occurrence of \(X\) with true, and in the case of \(f|\n(x),\) replace \(X\) with false).
Say that \(\x\) is an instance of $f|x \wedge f|\n(x)$, and thus \(f(\x) = 1\) and \(f(\x)\) remains \(1\) no matter how we set \(X\).  By taking the conjunction across all variables \(X\), we obtain all instances \(\x\) whose output would not flip after flipping any single feature \(X\).  Next, consider the robustness of the instances of \(\gek(2).\)  Some of these instances \(\x\) will become \(1\)-robust with respect to \(\gek(2).\)  These instances are in turn \(2\)-robust with respect to the original function \(f\).
More generally, we can compute \(\gek(k)\) from \(\gek(k-1)\), via
\(
\gek(k)
= \bigwedge_X ( \gek(k-1)|x \wedge \gek(k-1)|\n(x) ).
\)
We can now compute the functions \(f_k\) representing all of the \(k\)-robust examples of \(f\), via
\(
f_k = \gek(k) \wedge \neg \gek(k+1).
\)
The model count of \(f\), denoted by \(\modelcount(f)\), is the number of instances \(\x\) satisfying a Boolean function \(f\).  We can then compute the model robustness of \(f\) by:
\[
\modelrobust(f) = \frac{1}{2^n} \sum_{k=1}^{n} \modelcount(f_k) \cdot k.
\]

Consider now the \emph{most} robust instances of a function \(f\).
\begin{definition}[Maximum Robustness]
Consider a Boolean classification function \(f : \{0,1\}^n \rightarrow \{0,1\},\) where \(f\) is non-trivial.  The \underline{maximum robustness} of \(f\) is defined as:
\[
\maxrobust(f) = \max_{\x} \robust_f(\x).
\]
\end{definition}
Note that the instances \(\x\) of \(\gek(k)\) is a subset of the instances of \(\gek(k-1)\), as computed in Algorithm~\ref{alg:model}.  Hence, the model count of \(\gek(k)\) will decrease as we increase \(k\).  At a large enough \(k\), then \(\gek(k+1),\) and hence also \(f_{k+1}\), will have no models and will equal false.  At this point, we know \(k\) is the maximum robustness, and we can stop Algorithm~\ref{alg:model} early.
Further, this \(f_k\) gives us the set of examples that are most robust (requiring the most number of features to flip).\footnote{Note that if \(f\) is a non-trivial Boolean function, then \(f_n\) must be false.  Suppose \(f_n\) were not false, and that \(\x\) is an instance of \(f_n\).  This means we can flip any and all variables of \(\x\), and it would always be an example of \(f\).  This implies that \(f\) must have been true, and hence a trivial function. \label{footnote:fn}}

Finally, we observe that model-based robustness appears to be computationally more difficult than instance-based robustness.  In particular, the model-robustness over positive instances can be shown to be a PP-hard problem.
\begin{quote}
\textbf{D-P-MODEL-ROBUST:} Given function \(f\) and integer \(k\), is the positive model robustness of \(f\) at least \(k\)?
\end{quote}
\begin{theorem}
\emph{\textbf{D-P-MODEL-ROBUST}} is PP-hard.
\end{theorem}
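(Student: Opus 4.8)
The plan is to establish PP-hardness by a polynomial-time reduction from MajSAT, the canonical PP-complete problem of Figure~\ref{fig:complexity}: given a Boolean formula \(\phi\) over \(n\) variables, decide whether a strict majority of assignments satisfy it, i.e.\ whether \(\modelcount(\phi) > 2^{n-1}\). Since MajSAT is PP-complete, exhibiting such a reduction suffices. The conceptual difficulty is that the positive model robustness is a \emph{weighted} sum \(\sum_{\x : f(\x)=1} \robust_f(\x)\) rather than a plain model count, so the obvious choice \(f = \phi\) makes the quantity depend on the fine Hamming geometry of the satisfying set of \(\phi\), not merely on \(\modelcount(\phi)\). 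The key idea is therefore to engineer \(f\) so that \emph{every} positive instance has robustness exactly \(1\); this collapses the weighted sum back to a model count that we can threshold.

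Concretely, I would introduce a fresh variable \(Y\) and set \(f(Y,\x) = Y \wedge \phi(\x)\) over \(n+1\) variables. The positive instances of \(f\) are exactly the pairs \((1,\x)\) with \(\phi(\x)=1\), so there are precisely \(\modelcount(\phi)\) of them. I would then argue that each such instance is \emph{exactly} \(1\)-robust: flipping \(Y\) from \(1\) to \(0\) yields the negative instance \((0,\x)\) at Hamming distance \(1\), giving \(\robust_f(1,\x) \le 1\); and \(\robust_f(1,\x) \ge 1\) because the instance is positive and \(f\) is never identically \true\ (setting \(Y=0\) always falsifies \(f\)). Plugging this into the definition gives that the positive model robustness \(\sum_{\x:f(\x)=1}\robust_f(\x)\) of \(f\) equals \(\modelcount(\phi)\) exactly.

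With this identity, I would finish the reduction by emitting the D-P-MODEL-ROBUST instance \((f,k)\) with \(k = 2^{n-1}+1\): the answer is \emph{yes} iff the positive model robustness of \(f\) is at least \(2^{n-1}+1\), i.e.\ iff \(\modelcount(\phi) > 2^{n-1}\), i.e.\ iff \(\phi \in \text{MajSAT}\). Building \(f\) from \(\phi\) is plainly polynomial. I would also dispose of the degenerate case in passing: if \(\phi\) is unsatisfiable then \(f\) is the constant-\false\ function, which has no positive instances, so its positive model robustness is the empty sum \(0 = \modelcount(\phi)\)---consistent with the identity and with \(\phi\) being a \emph{no} instance of MajSAT.

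The main obstacle I anticipate is not the reduction itself but nailing down the boundary conventions so the argument is airtight. I must justify the lower bound \(\robust_f(1,\x) \ge 1\) by confirming that \(f\) is non-trivial whenever \(\phi\) is satisfiable (so that robustness is a genuine minimum rather than the \(\infty\) convention), and I must be explicit that ``positive model robustness'' sums over positive instances only---exactly the quantity \(M\) returned by Algorithm~\ref{alg:model}---so that the \(\infty\) convention for trivial functions never contributes even when \(\phi\) is unsatisfiable. Once these conventions are fixed, the gadget \(f = Y \wedge \phi\) does all the work.
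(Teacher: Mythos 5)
Your proposal is correct and takes essentially the same approach as the paper: both conjoin a fresh literal onto the given formula (your \(Y \wedge \phi\) versus the paper's \(f \wedge \ell\)) so that every positive instance has robustness exactly \(1\), collapsing the positive model robustness to a model count that is then compared against a threshold. The only cosmetic differences are that you reduce from MajSAT while the paper reduces from the threshold version of CNF model counting (both PP-complete), and that you use the unnormalized sum (Algorithm~\ref{alg:model}'s output) where the paper keeps the \(1/2^{n+1}\) normalization in its threshold.
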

\begin{proof}
Let \(f\) be a CNF formula over \(n\) variables and let \(\ell\) be a literal of a new variable.  Note that model counting is \#P-complete for CNF \cite{Valiant79}, and that any positive instance of the CNF \(f \wedge \ell\)  has a robustness of \(1\). It follows that \(\modelcount(f) \ge k\) iff the positive model robustness of \(f \wedge \ell\) is at least \({k}/{2^{n+1}}\).
\end{proof}
To compute model-based robustness using Algorithm~\ref{alg:model}, we must be able to negate, conjoin and condition on Boolean functions, as well as compute their model count.  Given a circuit represented as an SDD, operations such as negation and counting the models of an SDD can be done in time \emph{linear} in the size of the SDD.  Conjoining two SDDs of size \(s\) and \(t\) takes time \(O(st)\) although a sequence of conjoin operations may still take exponential time, as in Algorithm~\ref{alg:model}.

\section{A Case Study} \label{sec:case-study}

\def\digitzero{digit-0}
\def\digitone{digit-1}

We next provide a case study in explaining and verifying a convolutional neural network via knowledge compilation.

\subsection{(Binary) Convolutional Neural Networks}

In our case study, we consider \emph{binary} convolutional neural networks (binary CNNs).\footnote{A number of binary variations of neural networks have been proposed in the literature.  The XNOR-Networks of \cite{RastegariORF16} are another binary variation of CNNs, which also assumes binary weights.  The binarized neural networks (BNNs) of \cite{hubara2016binarized} have binarized parameters and activations.  In work closely related to ours, \cite{NarodytskaKRSW18} studied the verification of BNNs, using SAT solvers, as discussed in Section~\ref{sec:intro}.}  That is, if we assume binary inputs and step activations, then the outputs of all neurons are binary, and the output of the network itself is also binary.  Hence a binary CNN represents a Boolean function.  We can construct a Boolean circuit representing a binary CNN, and then compile it to a tractable one as described in Section~\ref{sec:kc}.

Our binary CNNs contain three types of layers:
\begin{itemize}
\item \textit{convolution + step layers:} a convolution layer consists of a set of filters, that can be used to detect local patterns in an input image.  Typically, a ReLU unit is applied to the output of a filter.  In a binary CNN, we assume step activations (whose parameters are trained first using sigmoid activations, then replacing them with step activations);
\item \textit{max-pooling layers:} a max-pooling layer can be used to reduce the dimension of an image, helping to reduce the overall computational and statistical demands.  In a binary CNN, if the inputs of a max-pooling layer is \(0\) or \(1\), then the ``max'' reduces to a logical ``or'';
\item \textit{fully-connected layers:} if the inputs are binary and if we use step activations, then each neuron represents a Boolean function, as in Section~\ref{sec:neurons}.
\end{itemize}

\subsection{Experimental Setup}

We consider the USPS digits dataset of handwritten digits, consisting of \(16 \times 16\) pixel images, which we binarized to black and white \cite{hull1994database}.  Here, we performed binary classification using different pairs of digits.   We first trained a CNN using sigmoid activations, using TensorFlow.  We replaced the sigmoid activations with step activations, to obtain a binary CNN that we compiled into a tractable circuit.  In particular, we compiled the binary CNN into a Sentential Decision Diagrams (SDD).  We shall subsequently provide analyses of the binary CNN, via queries on the SDD.

More specifically, we created two convolution layers, each with stride size $2$. We first swept a $3 \times 3$ filter on the original \(16 \times 16\) image (resulting in a $7 \times 7$ grid), followed by a second \(2 \times 2\) filter (resulting in a $3 \times 3$ grid).  These outputs were the inputs of a fully-connected layer with a single output.  We did not use max-pooling as the dimension was reduced enough by the convolutions.  Finally, we optimized a sigmoid cross-entropy loss using the Adam optimizer. 

The SDD circuits compiled from neural networks in the following experiments are exact as the compilation process utilized the exact neuron compiler based on~\cite{chanUAI03}.\footnote{An updated version of the compiler used in these experiments is available at \url{https://github.com/art-ai/nnf2sdd}. The compiler integrates the newly proposed neuron compiler with pseudo-polynomial time complexity.} We later also evaluate the newly proposed, approximate neuron compiler, which has a pseudo-polynomial time complexity, showing the trade-off it leads to between classification accuracy and size of compilation.

\subsection{Explaining Decisions}

\setlength{\fboxsep}{0pt}
\setlength{\fboxrule}{0pt}
\definecolor{mygray}{gray}{0.6}
\begin{figure}[t]
    \centering
    \begin{subfigure}[b]{0.45\linewidth} 
      \centering
      \captionsetup{width=\linewidth}
      \fcolorbox{mygray}{mygray}{\includegraphics[width=0.7\linewidth]{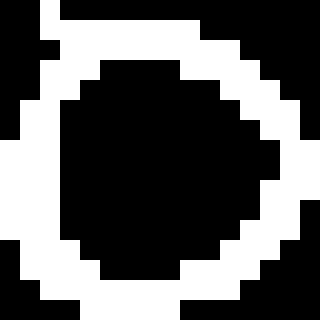}}
      \caption{\digitzero} \label{fig:inst_0}
    \end{subfigure}
    \begin{subfigure}[b]{0.45\linewidth} 
      \centering
      \captionsetup{width=\linewidth}
      \fcolorbox{mygray}{mygray}{\includegraphics[width=0.7\linewidth]{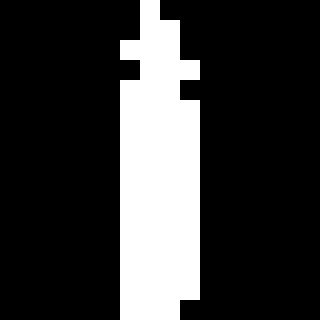}}
      \caption{\digitone} \label{fig:inst_1}
    \end{subfigure}
    \\
    \begin{subfigure}[b]{0.45\linewidth} 
      \centering
      \captionsetup{width=\linewidth}
      \fcolorbox{mygray}{mygray}{\includegraphics[width=0.7\linewidth]{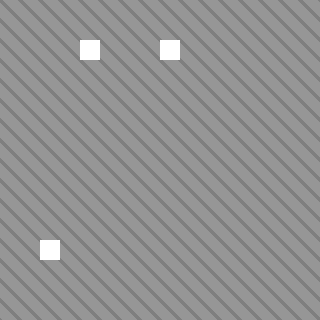}}
      \caption{\digitzero} \label{fig:pi_inst_0}
    \end{subfigure}
    \begin{subfigure}[b]{0.45\linewidth} 
      \centering
      \captionsetup{width=\linewidth}
      \fcolorbox{mygray}{mygray}{\includegraphics[width=0.7\linewidth]{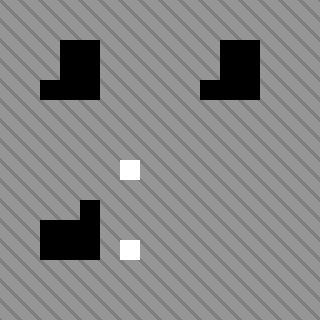}}
      \caption{\digitone} \label{fig:pi_inst_1}
    \end{subfigure}
    \begin{subfigure}[b]{0.45\linewidth} 
      \centering
      \captionsetup{width=0.8\linewidth}
      \fcolorbox{mygray}{mygray}{\includegraphics[width=0.7\linewidth]{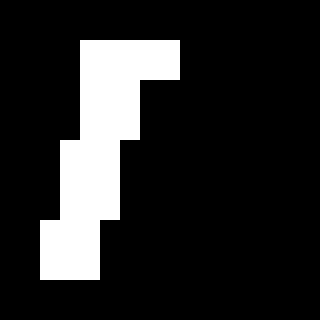}}
      \caption{a 1 labeled as a 0} \label{fig:pi_fool_0}
    \end{subfigure}
    \begin{subfigure}[b]{0.45\linewidth} 
      \centering
      \captionsetup{width=0.8\linewidth}
      \fcolorbox{mygray}{mygray}{\includegraphics[width=0.7\linewidth]{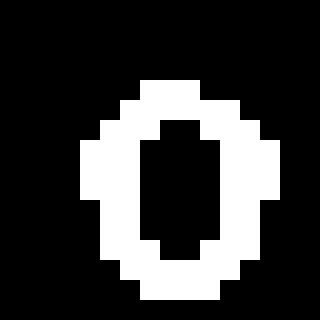}}
      \caption{a 0 labeled as a 1} \label{fig:pi_fool_1}
    \end{subfigure}
\caption{A correctly classified \digitzero\ and \digitone\ from the test set (first row), the corresponding PI-explanations (second row, gray striped regions represent ``don't care's''), and the corresponding fooling images (third row).
} \label{fig:inst_digits}
\end{figure}

We consider how to explain \emph{why} a neural network classified a given instance positively or negatively. In particular, we consider \emph{prime-implicant explanations} (PI-explanations), as proposed by \cite{ShihCD18}; see also \cite{IgnatievNM19a,DarwicheHirth20a}.
Say that an input image \(\x\) is classified positively, i.e., as a~\digitone. A PI-explanation returns the smallest subset \(\y\) of the inputs in \(\x\) that render the remaining inputs irrelevant. That is, once you fix the pixel values \(\y\), the values of the other pixels do not matter---the network will always classify the instance as a~\digitone.\footnote{Consider in contrast ``Anchors,'' recently proposed by \cite{anchor:nipsws16,anchors:aaai18}. An anchor for an instance \(\x\) is a subset of the instance that is highly likely to be classified with the same label, no matter how the missing features are filled in (according to some distribution).  In contrast, PI-explanations are \emph{exact}.}

We first trained a CNN to distinguish between \digitzero\ and \digitone\ images, which achieved \(98.74\%\) accuracy.  The resulting SDD had 5,900 nodes and 28,735 edges.
We took one correctly classified instance of each digit from the test set, shown in Figures~\ref{fig:inst_0}~\&~\ref{fig:inst_1}. The shortest PI-explanations for these two images are displayed in Figures~\ref{fig:pi_inst_0}~\&~\ref{fig:pi_inst_1}.
In Figure~\ref{fig:pi_inst_0}, the PI-explanation consists of three white pixels. Once we fix these three pixels, the network will always classify the image as a~\digitzero, no matter how the pixels in the gray region are set. Similarly, Figure~\ref{fig:pi_inst_1} sets three black patches of pixels to the left and right, and sets two center pixels to white, which is sufficient for the network to always classify the image as a~\digitone.

These PI-explanations provide strong guarantees: the pixels in the gray region can be manipulated in \emph{any} way and the classification would still not change. They are so strong in fact that one can easily create counterexamples to fool the network. In Figures~\ref{fig:pi_fool_0}~\&~\ref{fig:pi_fool_1}, we fill in the remaining pixels in such a way that the~\digitzero~image looks like a~\digitone, and vice versa. The network classifies these new images incorrectly because it is misled by the subset of pixels shown in the PI-explanation of Figures~\ref{fig:pi_inst_0}~\&~\ref{fig:pi_inst_1}. Using this method, we can readily generate counterexamples such as these.
We obtained similar results with other pairs of digits.

\subsection{Explaining Model Behavior}

\begin{figure}[t]
    \centering
    \begin{subfigure}[b]{0.45\linewidth} 
      \centering
      \captionsetup{width=\linewidth}
      \fcolorbox{mygray}{mygray}{\includegraphics[width=0.7\linewidth]{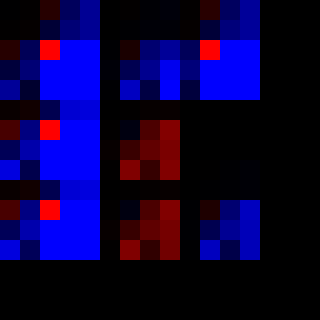}}
      \caption{Marginal grid} \label{fig:marginal_grid}
    \end{subfigure}
    \begin{subfigure}[b]{0.45\linewidth} 
      \centering
      \captionsetup{width=\linewidth}
      \fcolorbox{mygray}{mygray}{\includegraphics[width=0.7\linewidth]{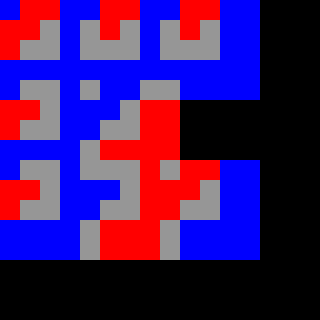}}
      \caption{Unateness grid} \label{fig:unate_grid}
    \end{subfigure}
\caption{Visualizations of the relationship between the output of the network and each individual pixel.
} \label{fig:grids}
\end{figure}

To explain the network's behavior as a whole (and not just per instance), we provide two visualizations of how each pixel contributes to the classification decision: a \emph{marginal grid} and a \emph{unateness grid}.
Figure~\ref{fig:marginal_grid} is a \emph{marginal grid}, which highlights the marginals of the output neuron, i.e., the probability that each pixel is white given that the output of the network is~\digitone.  In general, it is intractable to compute such marginals, which naively entails enumerating all \(2^{256}\) possible input images and then checking the network output.  If we can compile a neural network's Boolean function into a tractable circuit, like an SDD, then we can compute such marginals in time linear in the size of the circuit.

In Figure~\ref{fig:marginal_grid}, red pixels correspond to marginals greater than \(\frac{1}{2}\), and redder pixels are closer to one. Blue pixels correspond to marginals less than \(\frac{1}{2}\), and bluer pixels are closer to zero. The grid intensities have been re-scaled for clarity. Not surprisingly, we find that if the output of the network is high (indicative of a digit-1), then it is somewhat more likely that the pixels in the middle are set to white.

Figure~\ref{fig:unate_grid} is a \emph{unateness grid}, which identifies pixels that sway the classification in one direction only. Red pixels are positively unate (monotone), so turning them from off to on can only flip the classification from~\digitzero~to~\digitone. Blue pixels are negatively unate, i.e., turning them from off to on can only flip the classification from~\digitone~to~\digitzero. Black pixels are ignored by the network completely.
Finally, gray pixels do not satisfy any unateness property.  
In general, determining whether an input of a Boolean function is unate/monotone or unused are computationally hard problems.  In tractable circuits such as SDDs, they are queries that can be performed in time polynomial in the circuit size. 

In Figure~\ref{fig:unate_grid}, the majority of pixels are unate (monotone), suggesting that the overall network behavior is still relatively simple.  Note that there are many unused pixels on the right and bottom borders.  This can be explained by the lack of padding (i.e., given the filter size and stride length, no filter takes any of these pixels as inputs).  There is another block of unused pixels closer to the middle.  On closer inspection, these pixels are unique to one particular filter in the second convolution layer (no other filter depends on their values).  In the tractable circuit of the output neuron, we find that the circuit does not essentially depend on the output of this filter.  Thus, the output of the network does not depend on the values of any of these pixels.  Note that deciding whether an input of a neuron is unused is an NP-hard problem.\footnote{This reduction is similar to the one showing that compiling a linear classifier is NP-hard \cite{ShihCD18}.} However, given a tractable circuit such as an SDD, this question can be answered in time linear in the size of the circuit.

We emphasize a few points now. First, this (visual) analysis is enabled by the tractability of the circuit, which allows marginals to be computed and unate pixels to be identified efficiently. Second, the analysis also emphasizes that the network is not learning the conceptual differences between a~\digitzero~and a~\digitone. It is identifying subsets of the pixels that best differentiate between images of~\digitzero~and~\digitone~from the training set with high accuracy. This perhaps explains why it is sometimes easy to ``fool'' neural networks, which we demonstrated in Figure~\ref{fig:inst_digits}.

\subsection{Analyzing Classifier Robustness} \label{sec:robust-case-study}

\begin{figure}[t]
\centering
  \includegraphics[width=.8\linewidth]{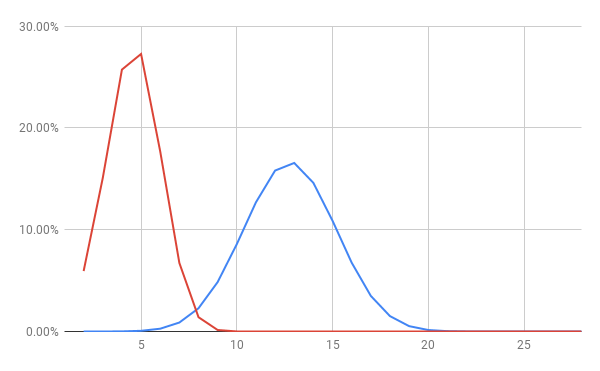}
  \label{fig:model_robustness}
\caption{Level of robustness \(k\) vs. proportion of instances.  Net 1 is plotted in blue (right) and Net 2 in red (left).}
\label{fig:robust}
\end{figure}

Next, we provide a case study in analyzing CNNs based on their robustness. We consider the classification task of discriminating between a digit-1 and a digit-2. 
%
First, we trained two CNNs with the same architectures (as described earlier), but using two different parameter seeds.  We achieved 98.18\% (Net 1) and 96.93\% (Net 2) testing accuracies.  
The SDD of Net 1 had 1,298 nodes and a size of 3,653.  The SDD of Net 2 had 203 nodes and a size of 440.\footnote{The size of a decision node in an SDD is the number of its children.  The size of an SDD is the aggregate size of its nodes.}
Net 1 obtained a model-robustness of 11.77 but Net 2 only obtained a robustness of 3.62.  For Net 2, this means that on average, 3.62 pixel flips are needed to flip a digit-1 classification to digit-2, or vice versa.  Moreover, the maximum-robustness of the Net 1 was 27, while that of Net 2 was only 13.  For Net 1, this means that there is an instance that would not flip unless you flipped (the right) 27 pixels.  These are two networks which are similar in terms of accuracy (differing by only 1.25\%), but \emph{very} different when compared by robustness.  

Figure~\ref{fig:robust} further highlights the differences between these two networks by the level of robustness \(k\).  On the \(x\)-axis, we increase the level of robustness \(k\) (up to the max of 27), and on the \(y\)-axis we measure the proportion of instances with robustness \(k\), i.e., we plot \(2^{-256} \cdot \modelcount(f_k)\), as in Section~\ref{sec:robustness}.
Clearly, the first network more robustly classifies a larger number of instances.
Given two networks with similar accuracies, we prefer the one that is more robust, as it would be more resilient to adversarial perturbations and to noise.  When we compute the average instance-based robustness of testing instances, Net 1 obtains an average of 4.47, whereas Net 2 obtains a lower average of 2.61, as expected.  

\def\myfigwidth{0.45\linewidth}
\def\myplotwidth{0.7\linewidth}

\begin{figure}[t]
\centering
\begin{subfigure}{\myfigwidth}
  \centering
  \includegraphics[width=\myplotwidth]{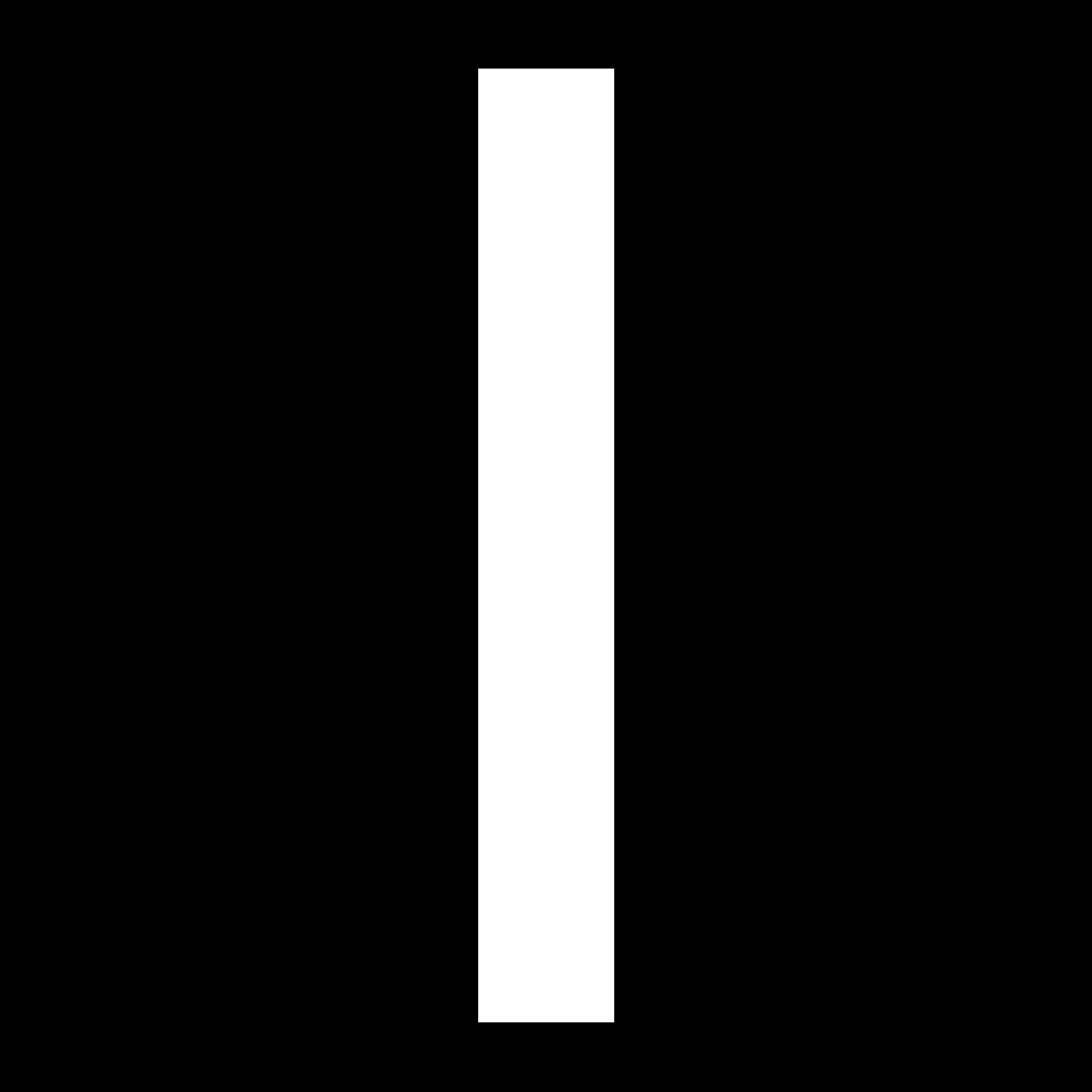}
  \caption{a most robust digit-1}\label{fig:sub1}
\end{subfigure}%
\begin{subfigure}{\myfigwidth}
 \centering
  \includegraphics[width=\myplotwidth]{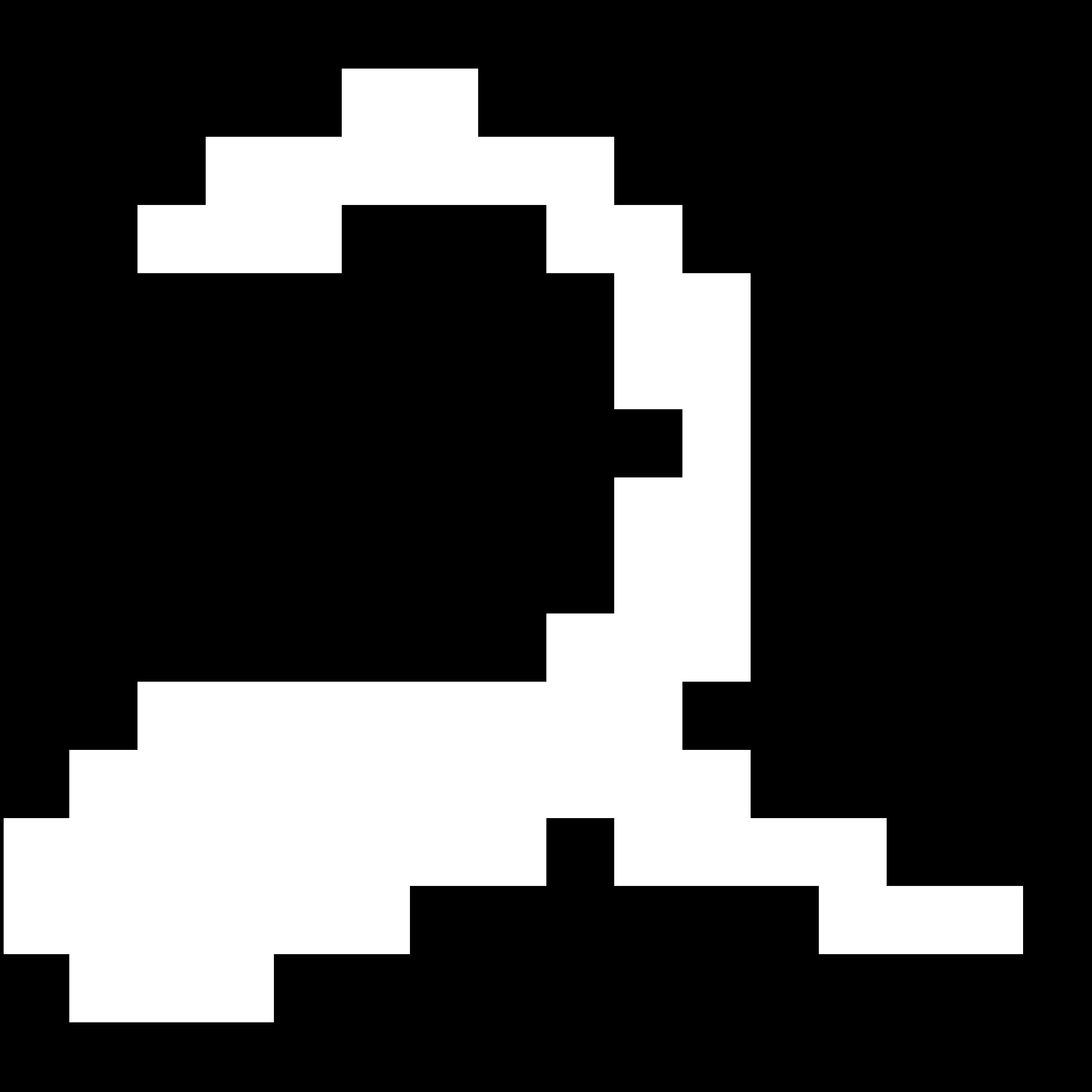}
  \caption{a most robust digit-2}\label{fig:sub8}
 \end{subfigure}%
\\
\begin{subfigure}{\myfigwidth}
\centering
  \includegraphics[width=\myplotwidth]{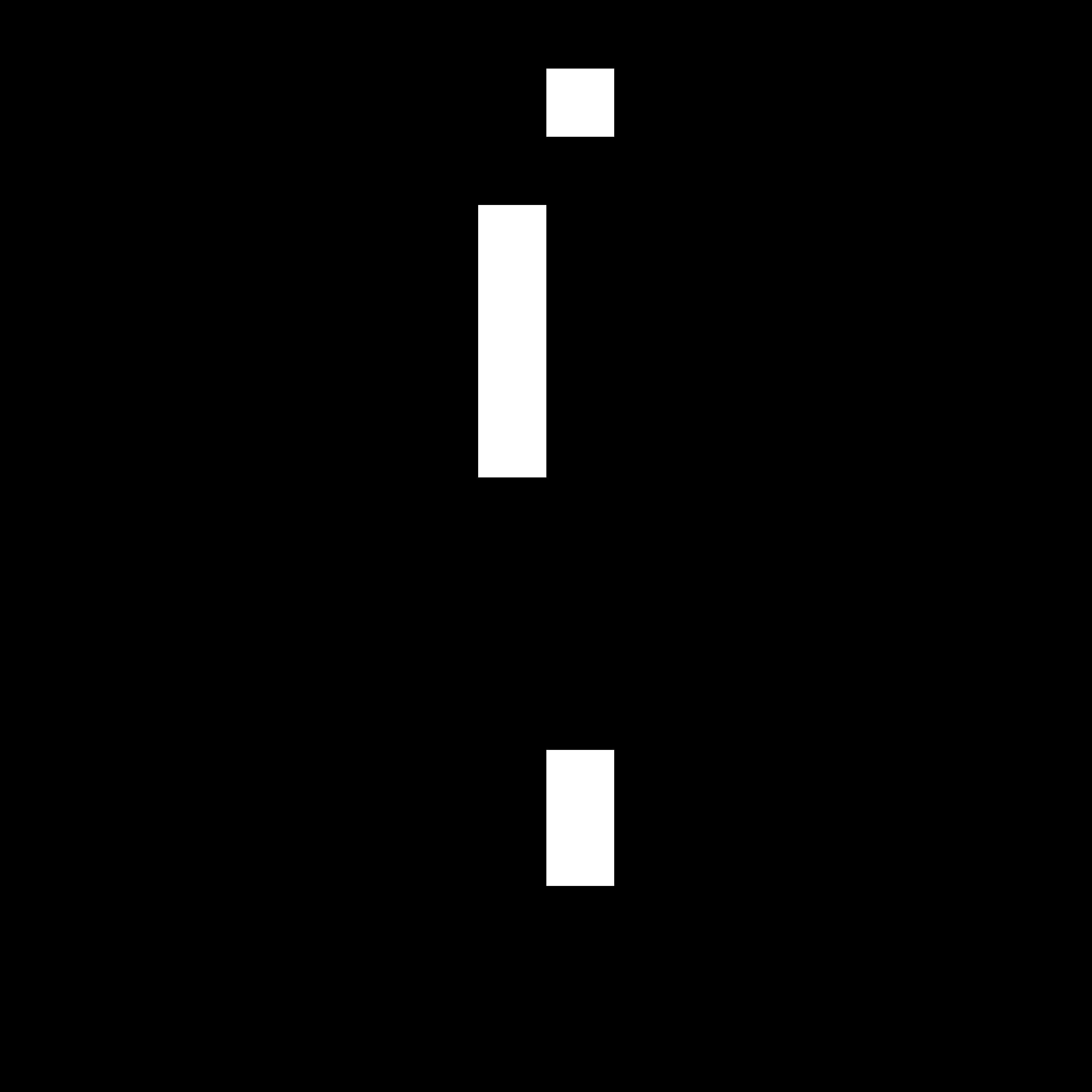}
  \caption{a least robust digit-1}\label{fig:sub3}
\end{subfigure}%
\begin{subfigure}{\myfigwidth}
  \centering
  \includegraphics[width=\myplotwidth]{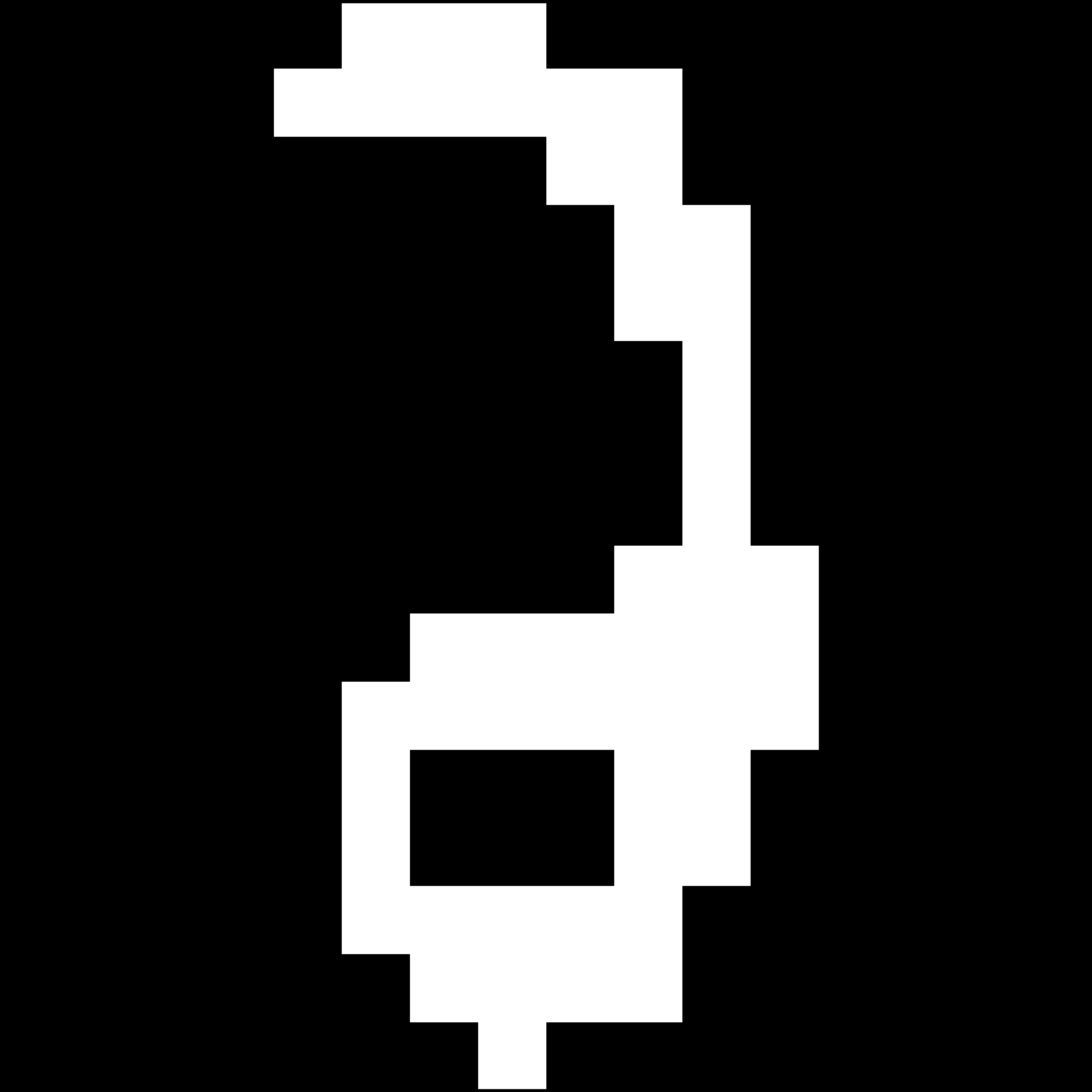}
  \caption{a least robust digit-2}\label{fig:sub4}
 \end{subfigure}%
\\
\begin{subfigure}{\myfigwidth}
  \centering
  \includegraphics[width=\myplotwidth]{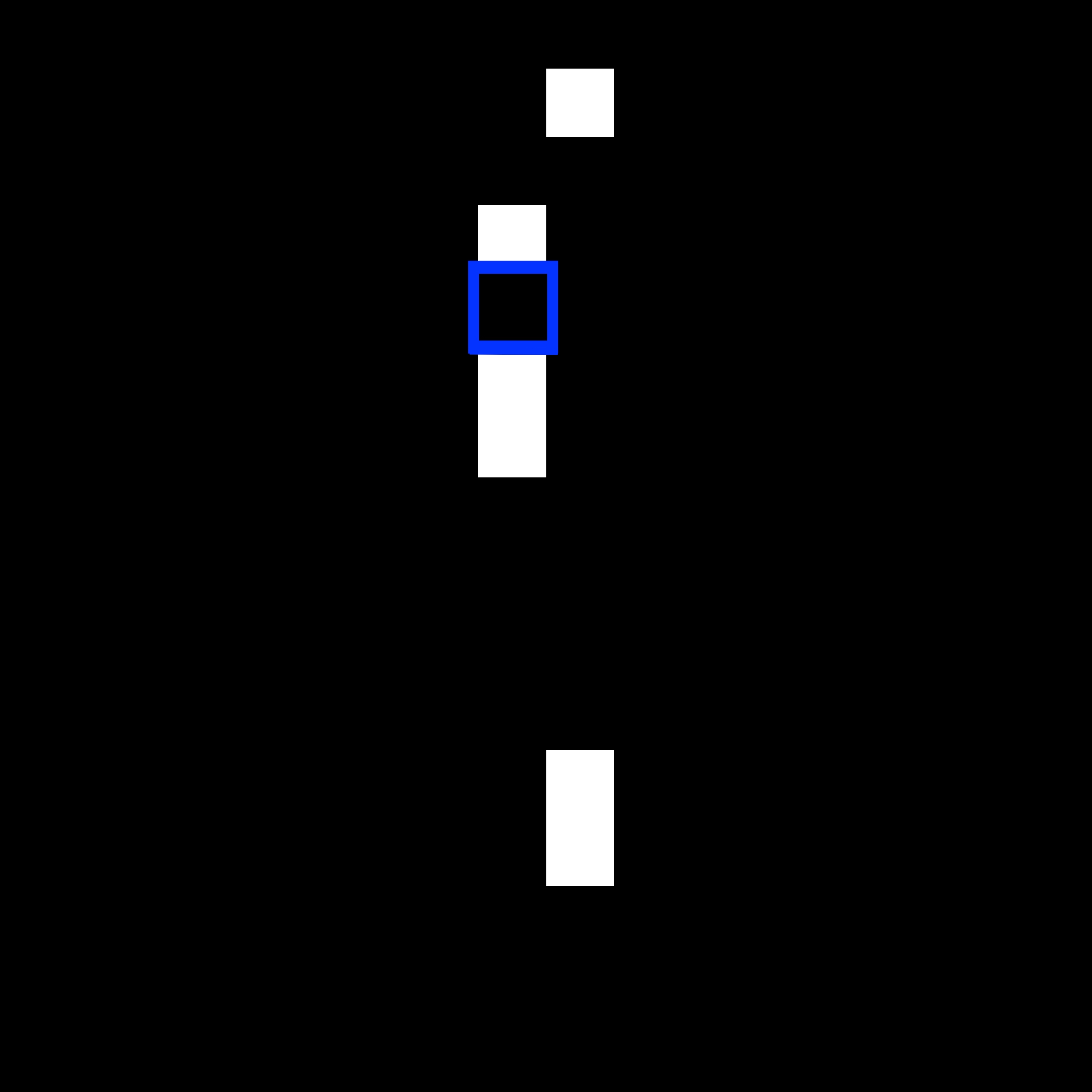}
  \caption{classified as digit-2 }\label{fig:sub5}
\end{subfigure}%
\begin{subfigure}{\myfigwidth}
  \centering
  \includegraphics[width=\myplotwidth]{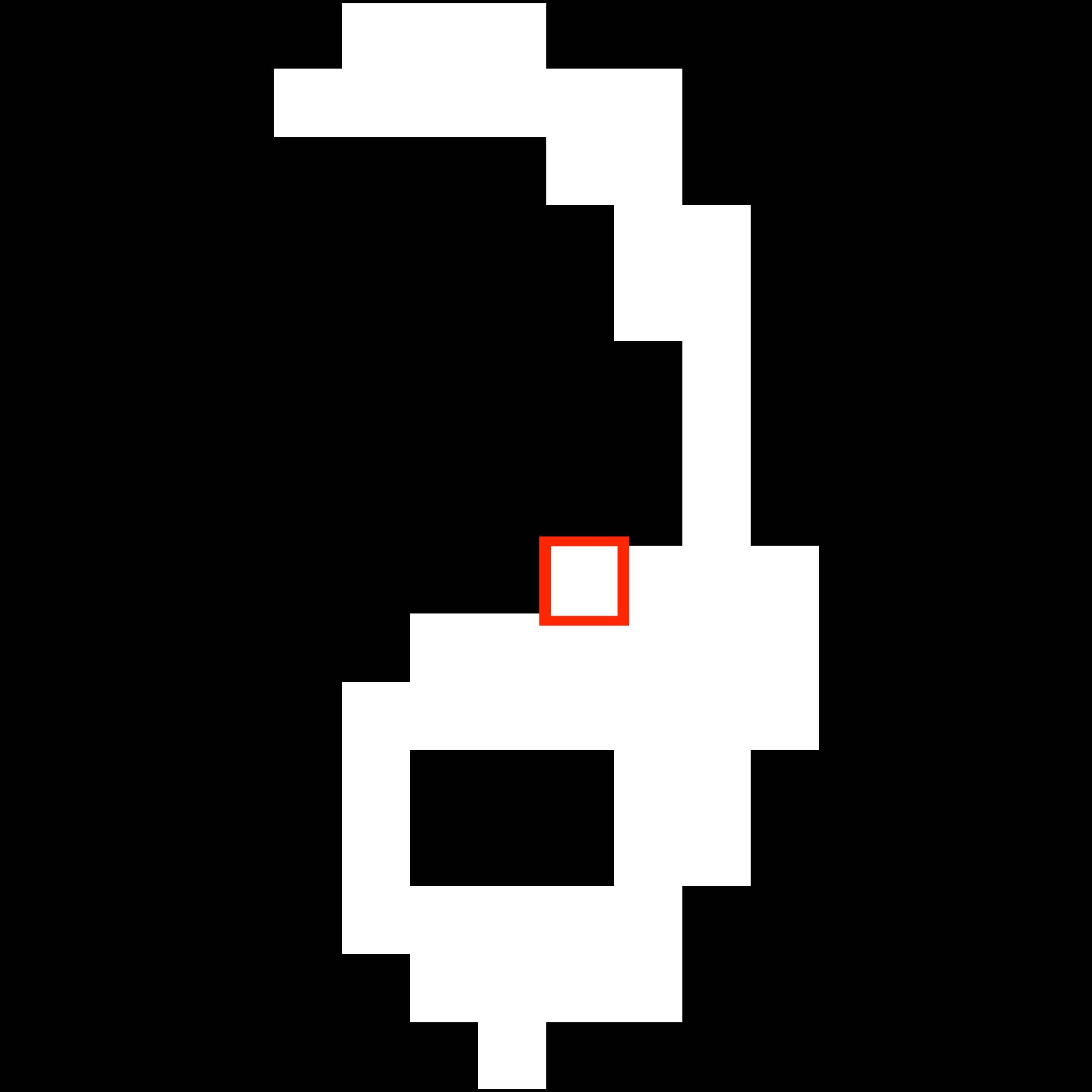}
  \caption{classified as digit-1}\label{fig:sub6}
 \end{subfigure}%
\caption{Visualizations of robustness.}
\label{fig:digit}
\end{figure}

Next, we consider in more depth Net 2, which again had a test set accuracy of \(96.93\%.\)
First, we visualize the most robust and the least robust instances of the CNN.  Figures~\ref{fig:sub1}~\&~\ref{fig:sub8} depict an example of a most robust digit-1 and digit-2, from the testing set.  Similarly Figures~\ref{fig:sub3}~\&~\ref{fig:sub4} depict an example of a least robust digit-1 and digit-2, both having robustness 1.  For these latter two instances, it suffices to flip a single pixel in each image, for the classifier to switch its label.  These perturbations are given in Figures~\ref{fig:sub5}~\&~\ref{fig:sub6}.  Finding training examples that have low-robustness can help finding problematic or anomalous instances in the dataset, or otherwise indicate weaknesses of the learned classifier.  Finding training examples that have high-robustness provides an insight into which instances that the classifier considers to be prototypical of the class.

\subsection{Pseudo-Polynomial Neuron Compilation}

\begin{figure}[t]
\centering
\includegraphics[width=.45\linewidth]{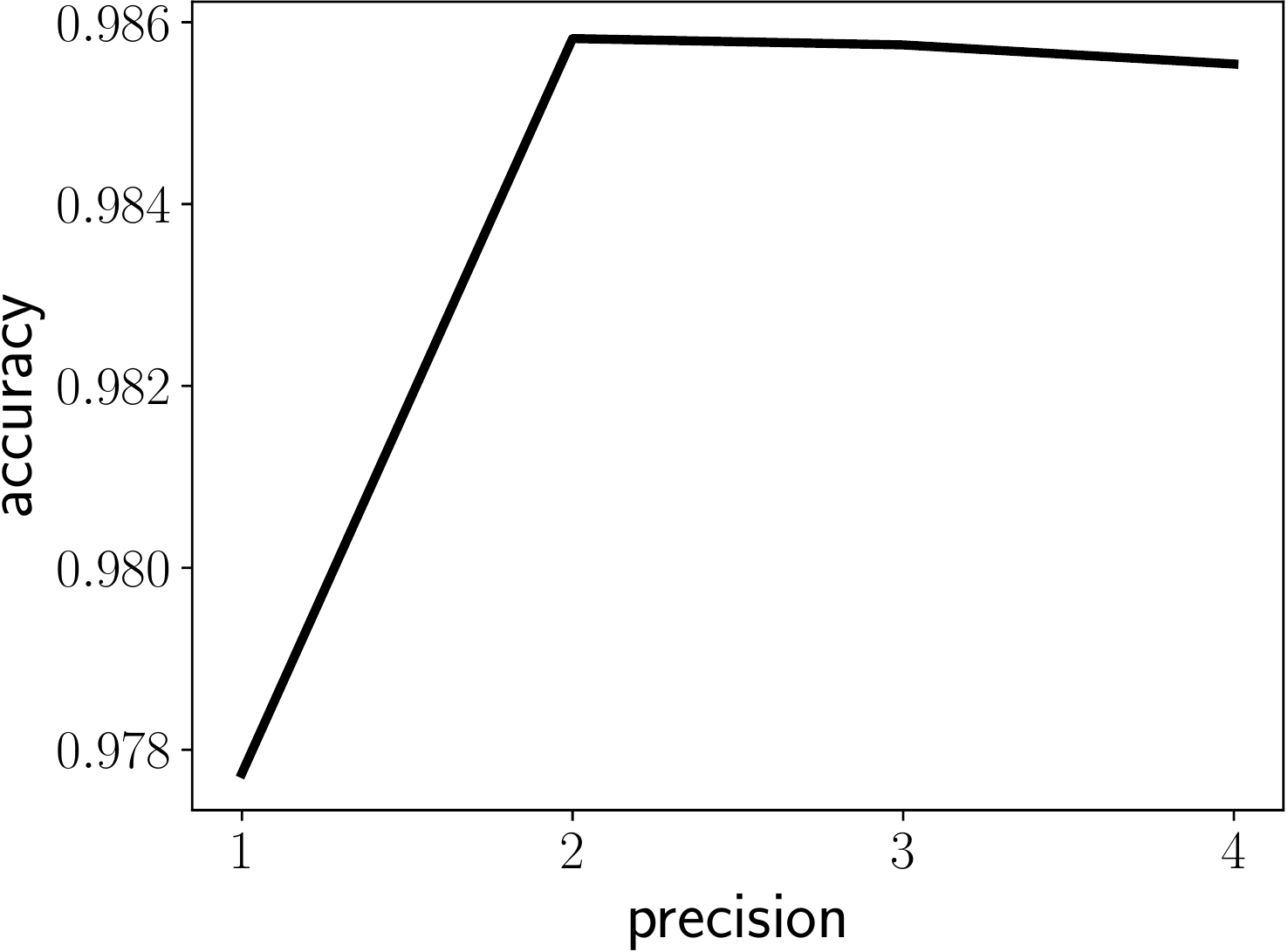}
\quad
\includegraphics[width=.45\linewidth]{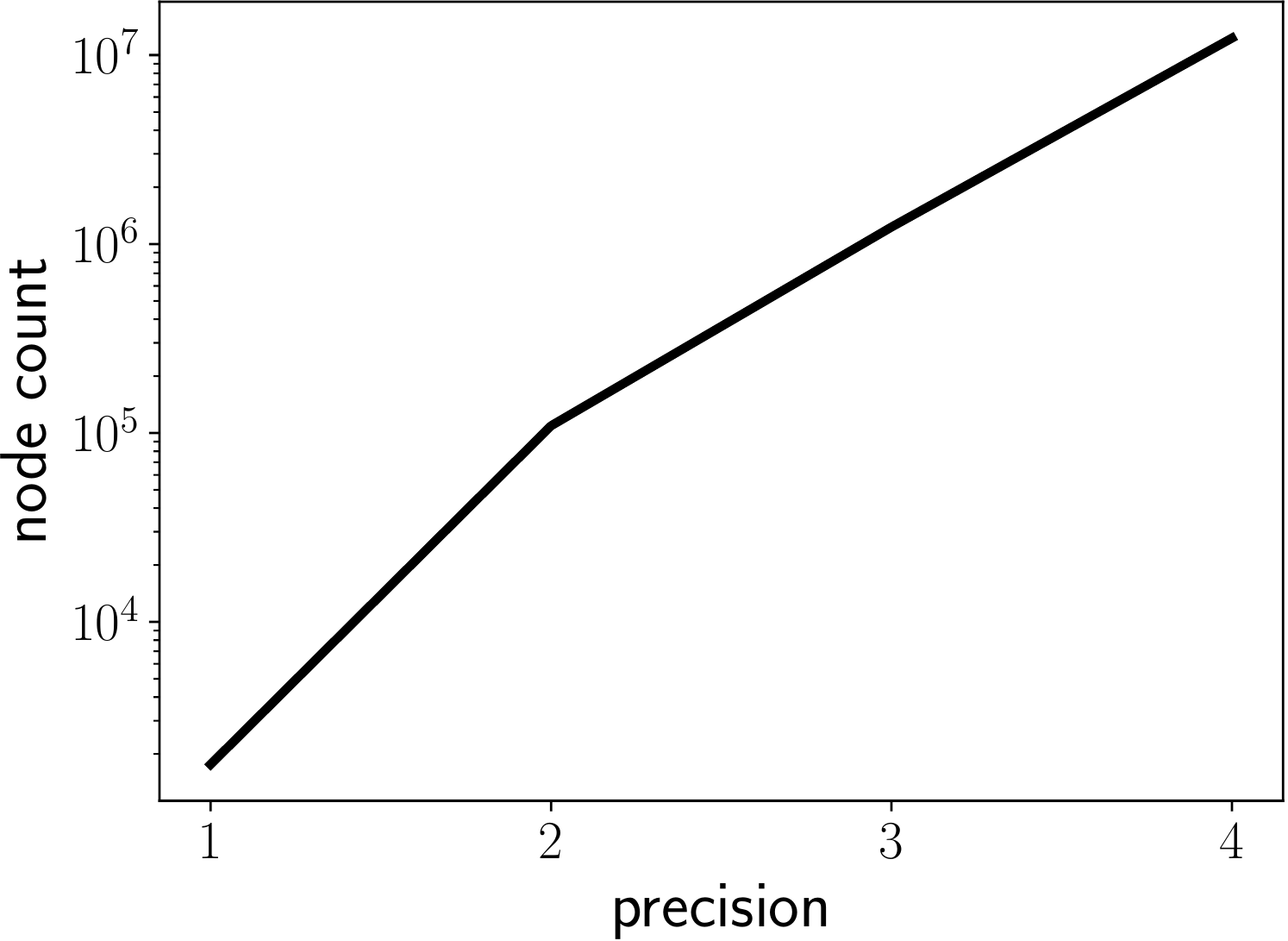}
\\
\includegraphics[width=.45\linewidth]{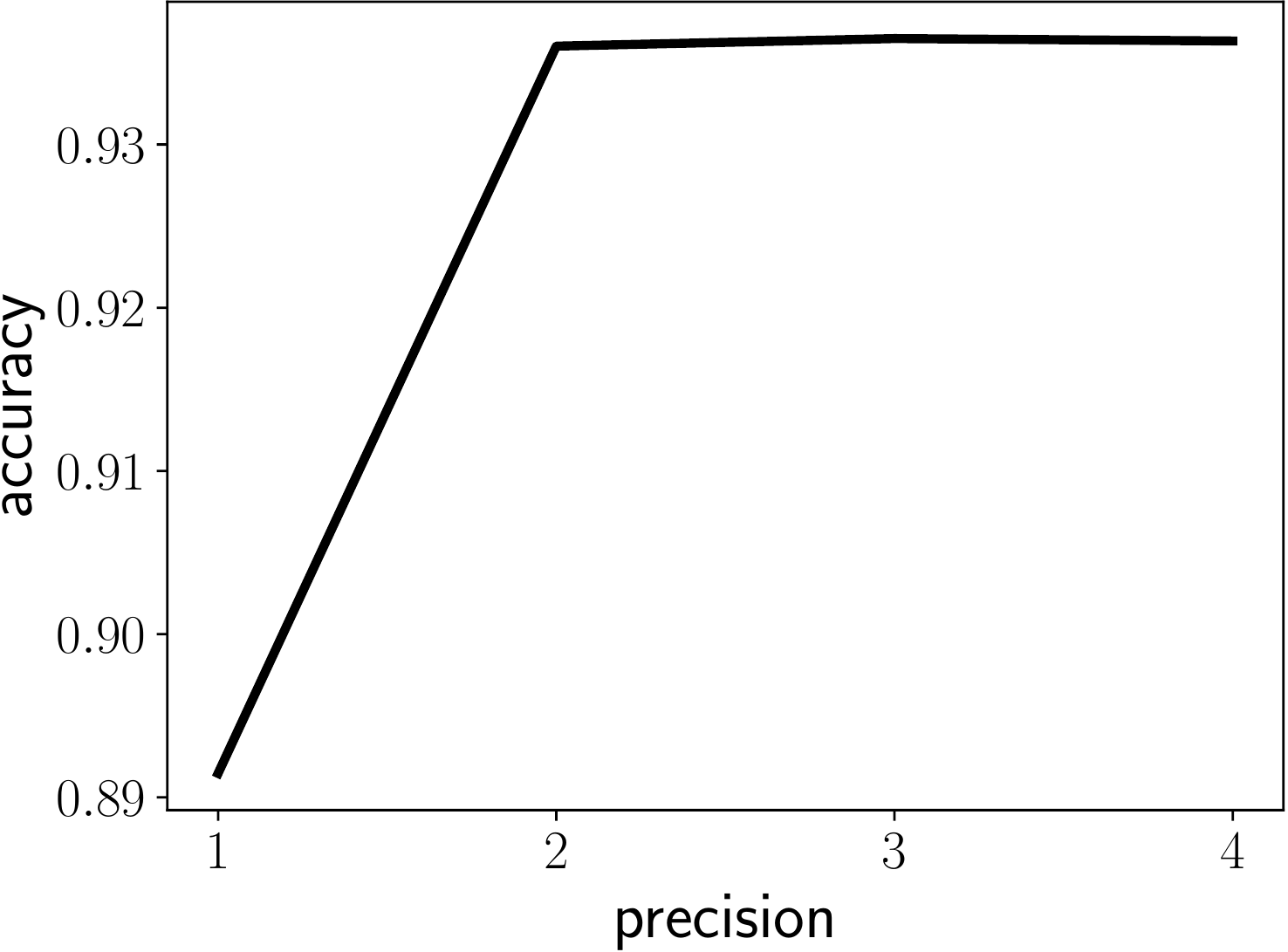}
\quad
\includegraphics[width=.45\linewidth]{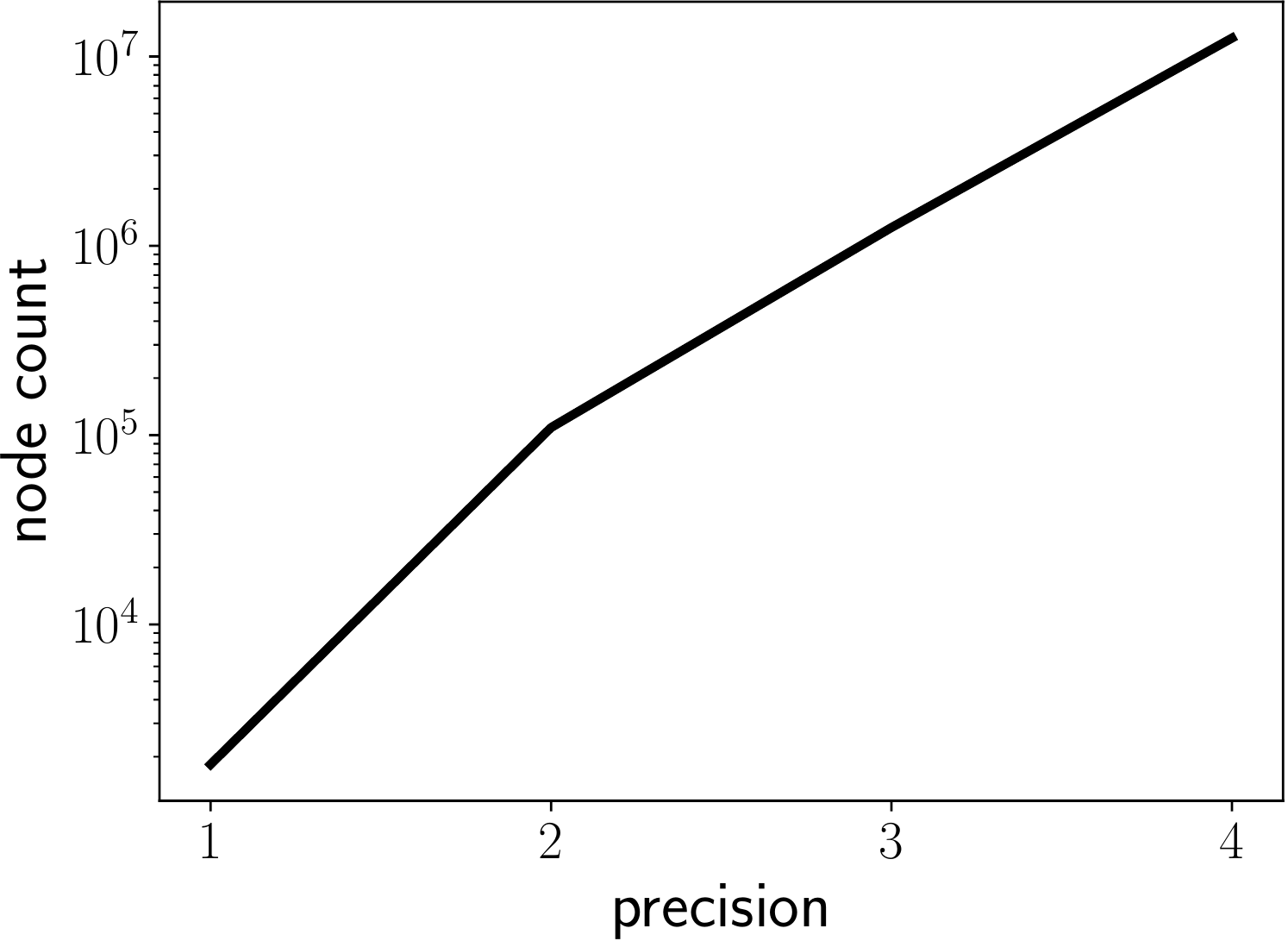}
\caption{Precision (in digits) versus test-set accuracy and node count.  Upper row is an average over 10 random parameter seeds for 0-vs-1.  Lower row is an overage over all 45 pairs of digits.
 \label{fig:exp}}
\end{figure}

Finally, we evaluate the pseudo-polynomial time algorithm of Theorem~\ref{theorem:pseudo-compile}, for compiling a neuron into an SDD (and more specifically into an OBDD).  This algorithm runs in polynomial time when the precision of the neuron's parameters are fixed.  We consider the same USPS digits dataset used in our case study.  Instead of training a neural network, we train a single neuron with \(16 \times 16 = 256\) inputs to classify a digit, which corresponds (roughly) to logistic regression.  Having \(n = 256\) inputs is well beyond the scope of the exact algorithm proposed by \cite{chanUAI03}, whose worst-case running time can be \(O(2^\frac{n}{2}).\)

Consider first Figure~\ref{fig:exp} (upper row).  Here, we trained a single neuron as a \(0\)-versus-\(1\) classifier, and averaged over 10 different parameter seeds.  On the \(x\)-axis, we increase the number of digits of precision in the weights, and on the \(y\)-axes we measure test-set accuracy and the size of the OBDD in terms of node count.  First, we find that with 2 digits of precision, we maintain a high \(98.6\%\) accuracy.  Next, we observe that even with a single digit of precision, we can still maintain around a \(97.8\%\) accuracy.  Next, we find that as we increase the digits of precision, the size of the resulting OBDD grows exponentially, as expected since compiling a neuron to an OBDD is NP-hard \cite{ShihCD18}.  All 10 cases failed (out-of-memory) for 5 digits of precision.  These observations suggest that a high degree of precision is not necessary to obtain good predictive performance, as observed by \cite{RastegariORF16,hubara2016binarized} as well, in the more extreme case of binarized weights.  We see a similar story in Figure~\ref{fig:exp} (lower row), where we have averaged over all 45 pairs of digits.

\section{Conclusion and Discussion} \label{sec:conclusion}

We proposed a knowledge compilation approach for explaining and verifying the behavior of a neural network.  We considered in particular neural networks with 0/1 inputs and step activation functions.  Such networks have neurons that correspond to Boolean functions.  The network itself also corresponds to a Boolean function, which maps an input feature vector into a class.  We showed how to compile the Boolean function of each neuron and the network itself into a tractable circuit in the form of a Sentential Decision Diagram (SDD).  We also introduced a pseudo-polynomial time algorithm that compiles neurons into tractable circuits, which can scale to neurons with hundreds of inputs when fixing the precision of neuron weights.
We also developed new queries and algorithms for analyzing the robustness of a Boolean function.  In a case study, we explained and analyzed the robustness of binary CNNs for classifying handwritten digits, and empirically evaluated the pseudo-polynomial time algorithm for compiling neurons.

As highlighted in Section~\ref{sec:kc}, the field of knowledge compilation studies the trade-offs between succinctness (the size of a circuit) and tractability (the number of polytime queries supported by the circuit).  In the context of explainable AI, this amounts to a trade-off between scalability and amenability to analysis. In terms of computational complexity, queries such as instance-based robustness correspond to the complexity class NP and can therefore
be tackled using SAT/SMT/MILP approaches.  Queries such as model-based robustness correspond to the complexity class PP and require approaches based on model counting and knowledge compilation (such as the approach we proposed in this paper). Since \(\mathrm{NP} \subseteq \mathrm{PP}\), approaches based on SAT solving (NP) are inherently more scalable, e.g., \cite{KatzBDJK17,NarodytskaKRSW18,IgnatievNM19a}, whereas approaches based on model counting and knowledge compilation offer more powerful types of analyses \cite{ShihCD18,BalutaSSMS19,Audemardetal20,DarwicheHirth20a}. Another class of promising approaches are the ones based on approximate model counting~\cite{BalutaSSMS19}, which represent an interesting compromise between scalability and analysis.

\appendix

\section{Proof of Theorem~\ref{theorem:pseudo-compile}} \label{sec:pseduo-compile}

\shrink{ 
Consider the following example of a neuron, from Section~\ref{sec:neurons}:
\[
1.15 \cdot A + 0.95 \cdot B - 1.05 \cdot C \ge 0.52 
\]
We can multiply both sides by \(100\) to obtain a neuron with an equivalent decision function, but with integer weights.
\[
115 \cdot A + 95 \cdot B - 105 \cdot C \ge 52 
\]
Let \(W = 115 + 95 + 105 + 52 = 367\) denote the sum of the absolute values of the neuron parameters, \(\), and let \(n = 3\) denote the number of variables.  Suppose we have variable ordering \(A,B,C\).  When we set \(A\) to 1 we obtain another neuron with one less input:
\[
95 \cdot B - 105 \cdot C \ge -63.
\]
When we set \(A\) to 0 we obtain the following neuron:
\[
95 \cdot B - 105 \cdot C \ge 52.
\]
These neurons differ only in the threshold being used.
}

Consider a neuron with inputs \(X_1,\ldots,X_n\).  Setting the inputs \(X_1,\ldots,X_i\) results in a smaller sub-classifier (or sub-neuron) over inputs \(X_{i+1},\ldots,X_n\).  No matter how we set the inputs \(X_1,\ldots,X_i\) the resulting sub-classifier is identical (has the same weights) except for the threshold being used.  Two different settings of inputs \(X_1,\ldots,X_i\) may lead to identical sub-classifiers with the same threshold.  Say we set variables from \(X_1\) to \(X_n\).  There are at most \(2W\) possible valid thresholds, so there are at most \(2 n W\) possible sub-classifiers that we can see while setting variables.

Consider an \(n \times W\) matrix \(A\) where cell \(A[i][j]\) is associated with the sub-classifier where variable \(X_i\) is about to be set, and where \(j\) is the threshold being used.  If \(X_i\) is set to \(1\), we obtain the sub-classifier at \(A[i+1][j+w_i]\) where \(w_i\) is the (integer) weight of feature \(X_i\).  If \(X_i\) is set to \(0\), we obtain the sub-classifier at \(A[i+1][j].\)  Each cell \(A[i][j]\) thus represents an OBDD node for the corresponding sub-problem, whose hi- and lo-children are known.  We add an \((n+1)\)-th layer where every sub-classifier with threshold below 0 is \(\bot,\) and where every sub-classifier at or above 0 is \(\top\).  The root of the original neuron's OBDD is then found at \(A[1][T]\), which we can extract and reduce if needed.

The size of matrix \(A\) bounds the size of the OBDD to \(O(nW)\) nodes.  Further, it takes constant time to populate each entry, and hence \(O(nW)\) time to construct the OBDD.

Finally, we note that the above construction also follows the proof of Theorem~\ref{theorem:compile} given by \cite{chanUAI03}, based also on identifying equivalence classes of sub-classifiers.  Thus, we can view Theorem~\ref{theorem:pseudo-compile} as a tightening of the bounds of Theorem~\ref{theorem:compile} for a special case (integer weights).

\section*{Acknowledgements}

This work has been partially supported by NSF grant \#ISS-1910317,
ONR grant \#N00014-18-1-2561, and DARPA XAI grant \#N66001-17-2-4032.

\bibliography{bib/aaai19}
\bibliographystyle{kr}

\end{document}